\documentclass{article}

\usepackage{arxiv}

\usepackage{microtype}
\usepackage{graphicx}
\usepackage{subcaption}
\usepackage{booktabs}
\usepackage{array}
\usepackage{hyperref}
\usepackage{amsmath}
\usepackage{amssymb}
\usepackage{mathtools}
\usepackage{amsthm}
\usepackage[capitalize,noabbrev]{cleveref}
\usepackage{natbib}
\usepackage{tikz}
\usepackage{tikz-cd}
\usepackage{pgfplots}

\usepackage{algorithm}
\usepackage{algorithmic}

\usepackage{xcolor}
\usepackage{enumitem}
\RequirePackage{doi}
\usetikzlibrary{arrows.meta,calc,positioning,decorations.pathreplacing,fit}
\pgfplotsset{compat=1.18}

\hypersetup{
  pdftitle={Latent-Confounded Causal Discovery via Lie-Bracket Geometry},
  pdfsubject={Causal discovery and information geometry},
  pdfauthor={Anonymous Authors},
  pdfkeywords={causal discovery, information geometry, intervention response fields, Lie brackets, geometric screening},
  colorlinks=true,
  linkcolor=blue!55!black,
  citecolor=blue!55!black,
  urlcolor=blue!55!black
}

\theoremstyle{plain}
\newtheorem{theorem}{Theorem}[section]
\newtheorem{proposition}[theorem]{Proposition}

\newtheorem{corollary}[theorem]{Corollary}
\theoremstyle{definition}
\newtheorem{definition}[theorem]{Definition}

\theoremstyle{remark}

\usepackage[utf8]{inputenc}
\usepackage{mathtools}
\usepackage{booktabs}
\usepackage{hyperref}
\usepackage{tikz-cd}
\usepackage{bm}

\newcommand{\Kl}{\mathsf{Kl}} % Kleisli category notation
\newcommand{\Dist}{\mathsf{Dist}}  % Distribution (probability) monad
\newcommand{\R}{\mathbb{R}}
\newcommand{\E}{\mathbb{E}}
\newcommand{\M}{\mathcal{M}}
\newcommand{\C}{\mathcal{C}}
\newcommand{\D}{\mathcal{D}}
\newcommand{\Lan}{\text{Lan}}
\newcommand{\Ran}{\text{Ran}}
\newcommand{\dobs}{P_{\text{obs}}}
\newcommand{\ddo}{P_{\text{do}}}

\title{\textbf{
Latent-Confounded Causal Discovery via Lie-Bracket Geometry}}
\author{Sridhar Mahadevan\\
Adobe Research and University of Massachusetts, Amherst\\
\texttt{smahadev@adobe.com, mahadeva@umass.edu}}

\date{\today}

\begin{document}

\maketitle

\begin{abstract}
We study causal discovery from observational and interventional regimes when latent variables may affect the measured system. Radon--Nikodym ratios between regime distributions, or learned transports realizing the same changes of measure, provide local response fields on a statistical manifold. Lie brackets of the learned fields then quantify whether their visible span is locally closed. Because a non-closing bracket can also arise from target mismatch, domain shift, weak overlap, insufficient adjustment, field-estimation error, or omitted coordinates, we use it as a diagnostic rather than as a certificate of hidden confounding.

Our first algorithm, BRIDGE (Bracket Residuals for Interventional Discovery and Geometric Estimation), combines a density-ratio or transport engine with a high-recall geometric screen and passes the retained arrows to a score-based or differentiable discovery method. The main formulation and experiments use known single-node intervention targets; in that regime the screen is designed to retain candidate directed effects, while a downstream learner determines the final graph or equivalence-class representation. Our second algorithm, Spectral Kernel Flow Matching (\textsc{SKFM}), amortizes the response fields, summarizes residual nonclosure by a spectral visible-footprint subspace, and applies an order-dependent graph extractor. Direct extraction succeeds on calibrated chains and selected motifs, but is unstable on harder random DAGs when the order must be learned. On ten-node nonlinear random DAGs, the more reliable hybrid role of the geometry is as a candidate generator: calibrated \textsc{SKFM}/\textsc{Bridge} fields followed by local BIC scoring achieve mean directed $F_1\simeq0.86$. Sachs protein signaling provides a real-data stress test and supports a diagnostic, not fully identified, interpretation. The contribution is therefore a practical interventional screening pipeline, explicit guarantees for screen retention and residual-footprint rank under stated assumptions, and a falsifiable account of the boundary between geometric diagnostics and causal identification.
\end{abstract}

\keywords{Causal Discovery \and Latent Confounding \and Lie Brackets \and Information Geometry \and Interventional Data}

\section{Introduction}
\label{sec:introduction}

Causal inference \citep{rubin-book,pearl-book} distinguishes passive observations from data collected after a system has been perturbed. This paper asks whether the local geometry of those regime changes can help causal discovery when some causes are unobserved. If an interventional law $P^{(e)}$ is absolutely continuous with respect to an observational reference law $P^{(0)}$, its Radon--Nikodym ratio describes the change of measure. Differentiating a smooth density-ratio model or a learned transport yields a response field. Lie brackets of several such fields test whether the visible distribution they span is involutive and hence locally integrable in the sense of Frobenius.

The interpretation requires care. Fixed hard interventions on distinct structural assignments commute at the level of graph surgery: applying $\mathrm{do}(X_i=x_i)$ and $\mathrm{do}(X_j=x_j)$ gives the same intervened SCM in either order. The fields studied here are instead estimated transports between empirical regimes. Their bracket can be nonzero because the transports depend on state, because their chosen extensions off the observed regimes differ, or because the visible coordinates do not form a closed description. Latent common causes are one possible source, but so are unknown targets, general domain shifts, support mismatch, insufficient adjustment, model error, and missing observed coordinates. Our central claim is therefore a screening claim: bracket geometry can expose where a visible response-field model fails to close and can help reduce a downstream discovery search. It does not turn noncommutativity into a latent-confounding oracle.

\subsection{Data Assumptions and Output}
\label{sec:data-assumptions-output}

Let $V=\{X_1,\ldots,X_d\}$ be the observed variables and let $L$ denote arbitrary unobserved variables. The latent projection of an underlying acyclic SCM onto $V$ is represented by a maximal ancestral graph (MAG); the invariant marks shared by observationally equivalent MAGs are represented by a PAG. We distinguish three inputs because they imply different estimands.

\subsubsection{\textsc{Bridge}-KT (Known Targets)}
We observe a reference sample $\mathcal D^{(0)}\sim P^{(0)}$ and regime samples $\mathcal D^{(e)}\sim P^{(e)}$, with target set $I_e\subseteq V$ supplied for each regime. The experiments primarily use one soft or hard single-node target per regime. The geometric layer returns a directed candidate mask, field-calibration diagnostics, and bracket-residual scores. A downstream latent-aware method should report PAG/MAG features; a downstream DAG scorer instead returns a selected visible approximation under its modeling restrictions. A unique underlying DAG is not claimed without additional identifying assumptions.

\subsubsection{\textsc{Bridge}-UT (Unknown Targets)}
Here the distributions are labeled by regime but $I_e$ is unknown. The pair consisting of the causal graph and intervention targets is generally identifiable only up to $\Psi$-Markov equivalence. Accordingly, the natural population output is a $\Psi$-PAG, as in $\Psi$-FCI \citep{jaber2020soft}, together with target information invariant across that class. A bracket or density-shift score may propose candidate targets, but it does not upgrade this output to a unique DAG or prove target recovery. The present theorems and controlled experiments do not establish \textsc{Bridge}-UT consistency.

\subsubsection{\textsc{Bridge}-Domain (General Domains)}
If regime labels index environments whose mechanism changes are not known to be interventions on a shared SCM, the inputs are domain shifts rather than interventional samples. The output is then a stability or transport diagnostic unless assumptions such as those of Joint Causal Inference connect context variables to mechanisms. We do not interpret arbitrary domain differences as do-interventions.

\subsubsection{Visible Graph Terminology}
Throughout, ``latent projection'' means the MAG induced on $V$. ``Visible DAG'' refers only to the DAG selected by a restricted score-based approximation over observed nodes. These objects are not interchangeable: a selected DAG can be useful for prediction or model comparison while failing to encode the equivalence-class uncertainty and bidirected structure of a PAG/MAG. Likewise, a large projected bracket residual is not itself a bidirected edge.

\subsubsection{Measure-Change Interface}
For regimes satisfying $P^{(e)}\ll P^{(0)}$, define
\[
  \rho_e(z)=\frac{dP^{(e)}}{dP^{(0)}}(z),
  \qquad
  \mathbb E_{P^{(e)}}[f(Z)]
  =
  \mathbb E_{P^{(0)}}[\rho_e(Z)f(Z)].
\]
This is a change-of-measure identity, not an identification theorem. Absolute continuity and adequate overlap are substantive requirements; when they fail, ratios can be undefined or unstable.

\subsection{Related Work} 

Existing causal discovery algorithms are commonly grouped into constraint-based, score-based, hybrid, functional-model, and continuous optimization families. PC/FCI-style methods use conditional-independence and separation tests, often representing latents through partial ancestral structure \citep{spirtes2000causation,cd-survey}; RFCI and GFCI variants trade off additional testing and score guidance for scalable latent-aware PAG discovery \citep{colombo2012learning,ogarrio2016hybrid}; GES-style methods search DAGs or equivalence classes under decomposable scores such as BIC \citep{chickering:jmlr}; and NOTEARS/DCDI-style methods optimize smooth edge weights with acyclicity penalties, gradients, or normalizing-flow likelihoods \citep{notears,lachapelle2020gradient,brouillard2020differentiable}.

The output depends on the data regime and assumptions. With latent variables, observational constraint-based discovery generally targets a PAG representing MAG invariants, not a unique visible DAG. With multiple soft-intervention distributions and unknown targets, the appropriate population target is the $\Psi$-Markov equivalence class of graph--target pairs and its $\Psi$-PAG representation; $\Psi$-FCI is sound and complete for those invariant endpoint marks under its assumptions \citep{jaber2020soft}. Joint Causal Inference instead augments the variable set by context variables and makes different assumptions about regime generation. Our known-target experiments are not an unknown-target identification result.

\textit{NOTEARS} is especially close in motivation to the present paper because it also begins from the super-exponential size of the DAG search space \citep{notears}. Its central move is to replace the discrete acyclicity constraint by a smooth and exact trace characterization over real-valued adjacency matrices, so that structure learning can be treated as continuous optimization. \textsc{Bridge} takes a complementary route. Rather than relaxing the graph search into a global matrix program, it estimates local interventional vector fields and uses Lie/Frobenius compatibility to compress the admissible arrow family before a downstream scorer is run. Thus both approaches attack the same combinatorial bottleneck, but at different levels: NOTEARS differentiates the acyclicity constraint itself, whereas \textsc{Bridge} changes the candidate family being scored by inserting an information-geometric screen sensitive to interventions, latent obstruction, and regime mismatch.

A particularly relevant functional-model line is LiNGAM: linear non-Gaussian acyclic modeling uses non-Gaussian independent disturbances to orient causal directions that are not identifiable from covariance structure alone \citep{shimizu2006lingam}. ICA-LiNGAM and DirectLiNGAM estimate causal order and edge weights under causal sufficiency \citep{shimizu2011directlingam,hyvarinen2013pairwise}; the modern Python \texttt{lingam} package collects these methods together with latent-confounder variants \citep{ikeuchi2023lingam}. For hidden variables, ParceLiNGAM seeks causal order robust to latent confounders \citep{tashiro2014parcelingam}, RCD repetitively extracts ancestors and parent relations in linear non-Gaussian models with latent confounders \citep{maeda2020rcd}, CAM-UV extends additive-noise discovery to unobserved variables \citep{maeda2021camuv}, and ABIC-LiNGAM formulates a differentiable LiNGAM objective with unmeasured confounding \citep{morinishi2025abic}. Closely related cumulant methods exploit higher-order moments in latent-variable LiNGAM models, including directed latent-confounded discovery and causal-effect identification under stronger structural assumptions \citep{cai2023cumulants,tramontano2025ceid}. Our contribution is orthogonal to this family: Lie/Frobenius geometry supplies a structural screen and latent-obstruction diagnostic before a downstream score-based, differentiable, or functional-model method chooses a final graph.

The distinction matters because equivalence-class search does not by itself remove the combinatorial scale of score-based discovery. Covered edge reversals collapse observationally indistinguishable DAGs into Markov equivalence classes, but the quotient is only a constant-factor reduction in the known counting results: Gillispie and Perlman's enumeration suggests a limiting DAG-to-class ratio of about $3.7$, and Schmid and Sly prove that the ratio of Markov equivalence classes to DAGs converges to a positive constant \citep{gillispie2002size,schmid2024number}. Thus a GES-style equivalence-class search still faces the same super-exponential order of growth; the geometric question is whether one can shrink the admissible arrow family before scoring begins.

\subsection{\textsc{Bridge}: Modeling Intervention through Infinitesimal Flows}

\begin{figure}[t]
\centering
\resizebox{\textwidth}{!}{%
\begin{tikzpicture}[
    font=\small,
    panel/.style={draw=black!25, fill=black!2, rounded corners=3pt, inner sep=6pt},
    dot/.style={circle, fill=black!55, inner sep=1.45pt},
    vec/.style={-{Latex[length=2.3mm]}, line width=0.8pt},
    flow/.style={-{Latex[length=2.6mm]}, line width=1.15pt, draw=blue!60!black},
    latent/.style={-{Latex[length=2.6mm]}, line width=1.15pt, dashed, draw=red!70!black},
    stage/.style={font=\bfseries\small, text=black!80},
    note/.style={font=\scriptsize, align=center, text=black!70, text width=5.1cm},
    cell/.style={draw=black!25, minimum width=0.36cm, minimum height=0.36cm, inner sep=0pt}
]
% Panel A: combinatorial search.
\node[panel, minimum width=6.0cm, minimum height=4.1cm, anchor=south west] (a) at (0,4.9) {};
\node[stage, anchor=north west] at ($(a.north west)+(0.12,-0.12)$) {A. Enumerate graphs?};
\node[note, anchor=north] at ($(a.north)+(0,-0.55)$) {Score-based discovery starts with\\a super-exponential DAG family.};

\coordinate (g1) at (1.05,6.9);
\coordinate (g2) at (3.0,6.9);
\coordinate (g3) at (4.95,6.9);
\foreach \g/\flip in {g1/0,g2/1,g3/2} {
    \node[dot] (\g-a) at ($(\g)+(0,0.68)$) {};
    \node[dot] (\g-b) at ($(\g)+(-0.45,-0.12)$) {};
    \node[dot] (\g-c) at ($(\g)+(0.45,-0.12)$) {};
}
\draw[vec, black!60] (g1-a) -- (g1-b);
\draw[vec, black!60] (g1-a) -- (g1-c);
\draw[vec, black!60] (g2-b) -- (g2-a);
\draw[vec, black!60] (g2-b) -- (g2-c);
\draw[vec, black!60] (g3-c) -- (g3-a);
\draw[vec, black!60] (g3-a) -- (g3-b);
\node[font=\Large, text=black!55] at (5.7,6.9) {$\cdots$};
\node[note, anchor=south] at ($(a.south)+(0,0.25)$) {$\#\mathrm{DAGs}(d)$ grows too quickly to search directly};

% Panel B: local geometry.
\node[panel, minimum width=6.0cm, minimum height=4.1cm, anchor=south west] (b) at (7.25,4.9) {};
\node[stage, anchor=north west] at ($(b.north west)+(0.12,-0.12)$) {B. \textsc{Bridge}: measure local flows};
\node[note, anchor=north] at ($(b.north)+(0,-0.55)$) {Infinitesimal interventions become\\vector fields on the data manifold.};
\draw[fill=blue!6, draw=blue!25, line width=0.5pt]
    plot[smooth cycle] coordinates {(8.1,6.15) (8.85,7.55) (10.25,7.85) (11.95,7.1) (11.75,5.85) (9.75,5.55)};
\foreach \p in {(8.55,6.35),(9.1,7.05),(9.8,6.42),(10.35,7.25),(11.05,6.45),(11.48,6.9)} {
    \node[circle, fill=black!40, inner sep=0.9pt] at \p {};
}
\draw[flow] (8.75,6.0) to[out=25,in=205] node[above, font=\scriptsize, text=blue!60!black] {$v_i$} (10.35,6.6);
\draw[flow] (9.15,5.85) to[out=70,in=230] node[right, font=\scriptsize, text=blue!60!black] {$v_j$} (9.9,7.25);
\draw[latent] (10.35,6.6) to[out=20,in=165] node[above, font=\scriptsize, text=red!70!black] {$[v_i,v_j]_\perp$} (11.6,6.48);
\node[note, anchor=north] at ($(b.south)+(0,0.65)$) {Non-closing brackets flag failed visible closure};

% Panel C: pruned scoring.
\node[panel, minimum width=7.2cm, minimum height=4.5cm, anchor=south west] (c) at (3.0,-1.0) {};
\node[stage, anchor=north west] at ($(c.north west)+(0.12,-0.12)$) {C. Score only the \textsc{Bridge} mask};
\node[note, anchor=north] at ($(c.north)+(0,-0.55)$) {The geometry mask feeds\\constrained discovery runners.};
\foreach \r in {0,...,5} {
  \foreach \col in {0,...,5} {
    \node[cell, fill=black!4] at ($(4.0,1.65)+(\col*0.42,-\r*0.42)$) {};
  }
}
\foreach \col/\r in {0/1,1/3,2/0,2/4,3/5,4/2,5/1,5/4} {
    \node[cell, fill=green!55!black] at ($(4.0,1.65)+(\col*0.42,-\r*0.42)$) {};
}
\node[font=\scriptsize, text=black!65, align=center] at (5.05,-0.82) {candidate edge mask};
\coordinate (runnerhub) at (7.25,0.6);
\draw[vec, draw=black!55] (6.52,0.6) -- (runnerhub);
\node[draw=black!25, fill=green!7, rounded corners=2pt, align=center, font=\scriptsize, minimum width=1.75cm, minimum height=0.48cm] (ges) at (8.75,1.75) {GES/BIC};
\node[draw=black!25, fill=green!7, rounded corners=2pt, align=center, font=\scriptsize, minimum width=1.75cm, minimum height=0.48cm] (tces) at (8.75,0.95) {TCES/GES};
\node[draw=black!25, fill=blue!7, rounded corners=2pt, align=center, font=\scriptsize, minimum width=1.75cm, minimum height=0.48cm] (dcdi) at (8.75,0.15) {DCDI weights};
\node[draw=black!25, fill=black!2, rounded corners=2pt, align=center, font=\scriptsize, minimum width=1.85cm, minimum height=0.55cm] (family) at (8.75,-0.65) {selected\\DAG family};
\draw[vec, draw=black!45] (runnerhub) -- (ges.west);
\draw[vec, draw=black!45] (runnerhub) -- (tces.west);
\draw[vec, draw=black!45] (runnerhub) -- (dcdi.west);
\draw[vec, draw=black!45] (ges.south) -- (tces.north);
\draw[vec, draw=black!45] (tces.south) -- (dcdi.north);
\draw[vec, draw=black!45] (dcdi.south) -- (family.north);
\node[note, anchor=north] at ($(c.south)+(0,-0.55)$) {The hard global search becomes a smaller constrained search};

% Connecting arrows.
\draw[-{Latex[length=2.8mm]}, line width=1.1pt, draw=black!45] ($(a.south east)+(-0.35,0)$) -- ($(c.north west)+(0.35,0)$);
\draw[-{Latex[length=2.8mm]}, line width=1.1pt, draw=black!45] ($(b.south west)+(0.35,0)$) -- ($(c.north east)+(-0.35,0)$);
\end{tikzpicture}
}
\caption{\textsc{Bridge} pipeline for information-geometric causal discovery. \textsc{Bridge} first estimates local regime-response fields. Directed influence proposes candidate arrows, while Lie-bracket residuals flag failures of visible closure that require further discrimination among latent structure, target or domain mismatch, weak overlap, omitted coordinates, and estimation error. Downstream discovery is then run only inside the geometry-pruned family.}
\label{fig:geometry_discovery_pipeline}
\end{figure}

We introduce an  algorithmic family \textsc{Bridge} of causal discovery methods, short for \emph{Bracket Residuals for Interventional Discovery and Geometric Estimation}. Figure~\ref{fig:geometry_discovery_pipeline} illustrates the conceptual structure of the \textsc{Bridge} framework. 
Rather than beginning with a super-exponential family of labeled DAGs and asking which graph generated the data, \textsc{Bridge} asks where local causal information is compatible, where it fails to glue, and which visible directions witness the obstruction.  A \textsc{Bridge} method is any pipeline that estimates intervention-induced response fields, computes Lie/Frobenius residuals, and uses the resulting compatibility structure to constrain or regularize downstream discovery. It is deliberately not tied to one density estimator: normalizing flows, semiparametric influence-function estimators, Kernel--Stein energy models, or regularized Gaussian engines can all instantiate the interventional density layer when they supply calibrated Radon--Nikodym fields. A non-closing Lie bracket is therefore not treated as a final causal verdict, but as a computable witness of failed visible integrability: evidence for latent structure, regime mismatch, insufficient adjustment, or a missing coordinate in the causal description. The resulting discovery layer acts as a structural front end that proposes a smaller candidate family for downstream GES/BIC, TCES, or DCDI-style scoring.

This distinguishes the present work from our earlier sheaf-theoretic framework for decentralized causal discovery using Judo Calculus \citep{mahadevan2025decentralizedcausaldiscoveryusing}. In that setting, sheaf structure places existing discovery routines such as GES and DCDI inside a local-to-global consistency layer over regime covers: the central question is which local causal claims persist across contexts and which fail to glue. Here the consistency test is infinitesimal rather than sheaf-theoretic. Intervention-induced vector fields are tested for Lie/Frobenius integrability before score-based or differentiable discovery begins, so in favorable regimes the method does not merely stabilize local discoveries across a cover; it collapses the admissible search family that downstream algorithms must score.

Classical structural causal models define interventions by graph surgery or modified structural mechanisms. \textsc{Bridge} uses the distributions produced by those interventions as measure changes relative to a reference regime. Invertible flows or direct ratio estimators approximate the corresponding response fields; their calibration is statistical and does not require a categorical interpretation.

The distinction between association and intervention is already visible in the back-door criterion \citep{pearl-book}. A set $Z$ is a valid back-door adjustment set for the effect of $X$ on $Y$ when no element of $Z$ is a descendant of $X$ and $Z$ blocks every path from $X$ to $Y$ that enters $X$ through an incoming arrow. Under this condition the causal effect is identifiable by
\[
    P(Y\mid \mathrm{do}(X=x))=\sum_z P(Y\mid X=x,Z=z)P(Z=z),
\]
with the obvious integral replacing the sum for continuous $Z$. This identity is invoked only when its graphical assumptions hold; it is not derived from a Kan-extension duality.

Our paper explores the computational promise of this reformulation. The \textsc{Bridge} algorithm  uses local Lie/Frobenius geometry as a high-recall \emph{candidate-family reducer}: it screens directed arrows by local intervention-field influence and records non-closing brackets as latent-obstruction diagnostics. A downstream TCES-style Gaussian BIC scorer, GES-style search, or geometry-constrained differentiable discovery routine then scores only the pruned family. This hybrid keeps the global advantages of score-based and differentiable discovery while replacing exhaustive enumeration by a geometric local-to-global test. 

\subsection{\textsc{SKFM}: Causal Discovery as Lie-Algebraic Geometric Inference}

We also introduce and analyze a more ambitious direct method, \textsc{SKFM}. Whereas \textsc{Bridge} screens admissible arrows before downstream selection, \textsc{SKFM} asks how far graph extraction can proceed directly in the learned field space. It learns amortized response fields, estimates a spectral residual-footprint projector, and applies an ordered graph extractor with a triangular penalty. The experiments show that the ordering step, rather than field fitting, is the principal bottleneck outside simple motifs.

A common perspective in causal inference holds that the true DAG can be recovered without searching the super-exponential space of possible structures — provided one has access to a sufficient set of perfect interventions \citep{hauser2012characterization,maathuis2009estimating}. Indeed, under ideal conditions, such methods guarantee identifiability by narrowing the Markov equivalence class to a single DAG. However, this approach rests on assumptions that are rarely satisfied in practice: interventions must be perfectly targeted, fully observed, and free of confounding — and crucially, one must already know which variables to intervene on. In complex systems — such as gene regulatory networks, educational outcomes, or economic markets — interventions are often indirect (e.g., “increase school funding”), noisy, or partially observed. Moreover, latent confounders frequently violate the assumption that the true causal structure lies within the observed Markov equivalence class.

Rather than interpreting every residual as a hidden cause, \textsc{SKFM} factors the second moment of bracket residuals into directions of visible nonclosure. Under the explicit low-rank factorization and noncancellation assumptions of Proposition~\ref{latent_prop}, the rank of this footprint can equal the number of visible latent loading directions. Without those assumptions it is only a residual rank and need not equal the number, identity, or causal role of latent variables.

\subsection{Scope Under Latent Confounding}

FCI and RFCI permit arbitrary latent common causes represented through MAG/PAG structure; they do not require latents to be sparse, local, or mutually independent. Their assumptions and outputs differ from ours: they use conditional-independence information to identify invariant ancestral marks, whereas \textsc{Bridge} uses regime-response geometry as a screen. The factor assumptions used in our spectral proposition are therefore stronger assumptions introduced for a particular rank interpretation, not an advantage over FCI/RFCI.

\subsection{Theoretical Results on \textsc{Bridge} and \textsc{SKFM}}

The paper includes a scoped theoretical analysis. Proposition~\ref{geom-screen} proves retention of a prespecified population candidate set under field consistency and an influence margin; Corollary~\ref{downstream-selection} separately transfers the identification guarantee of a chosen downstream learner. Proposition~\ref{latent_prop} establishes spectral recovery of a visible residual-factor rank under an explicit low-rank factorization, finite moments, and an eigengap. Proposition~\ref{gradient_flow} supplies an ordered acyclicity certificate. These are screen-retention, residual-rank, and ordered-acyclicity results, not unrestricted recovery guarantees under arbitrary hidden confounding.

The rest of the paper constructs density-ratio engines and response fields, develops the \textsc{Bridge} screen and its retention guarantee, and presents \textsc{SKFM} with explicitly scoped rank and ordered-acyclicity results. The experiments distinguish field calibration, screen recall, ordering, and final graph selection; the final discussion lists the falsification tests required before a bracket residual is attributed to latent structure.

\section{Interventional Density and Response-Field Estimation}
\label{sec:ide}

We briefly summarize how normalizing flows, direct density-ratio estimators, semiparametric influence-function estimators, and
kernel-Stein energy models can supply the statistical interface used by \textsc{Bridge} \citep{rezende2015variational,dinh2016density,papamakarios2021normalizing,lipman2023flow,sugiyama2012density,bickel1993efficient,robins1994estimation,vanderlaan2011targeted,chernozhukov2018double,liu2016kernelized,chwialkowski2016kernel,gorham2017measuring}. Throughout, $(Y,X,Z)$ denote random variables over a common probability
space, with $P$ the observational law and $P^{(i)}$ the law obtained
under intervention $i$.

% ======================================================================
\subsection{Density Engines as RN-Ratio Providers}

\textsc{Bridge} requires an estimator of the Radon--Nikodym ratio between an
interventional law $P^{(i)}$ and the observational law $P$:
\[
    \rho_i(\omega)=\frac{dP^{(i)}}{dP}(\omega),\qquad
    \ell_i(\omega)=\log \rho_i(\omega).
\]
Thus any model that supplies observational and interventional log-densities can
serve as a response-field engine:
\[
    \ell_i(\omega)
    \approx
    \log p_{\mathrm{do}}(\omega\mid i)-\log p_{\mathrm{obs}}(\omega),
\]
with additive constants cancelling in RN-ratio identities. Normalizing flows provide tractable log-likelihoods or continuous transport velocities; density-ratio methods estimate $\rho_i$ directly without separately estimating both densities; semiparametric influence-function and doubly robust estimators supply orthogonal estimating equations and uncertainty controls for interventional functionals; and Kernel--Stein energy models use score identities to fit or compare distributions even when normalizing constants are unavailable. These estimator families differ in statistical assumptions and uncertainty
guarantees, but they enter the present discovery layer through this same
interface. The experiments below use regularized Gaussian density engines
because they make the first tractability claim transparent and cheap to
reproduce; replacing them by flows or energy models changes only how
$\ell_i$ and its derivatives are estimated.

When a valid back-door set $Z$ is known, one may additionally regularize an
interventional engine toward the classical identity
\[
    P(y\mid\mathrm{do}(x))=\int P(y\mid x,z)P(z)\,dz
\]
through an adjustment-calibration loss
\[
\mathcal{L}_{\mathrm{adj}}
=
\mathbb{E}_{x}\!\left[
D\!\left(
q_\theta(y\mid\mathrm{do}(x)),\,
\int p_\phi(y\mid x,z)p_\psi(z)\,dz
\right)
\right],
\]
where $D$ is a statistical divergence. This optional term is justified by the
back-door identity and only when $Z$ is a valid adjustment set.

\subsection{From Density Ratios to Discovery Operators}

The same RN ratios also generate the local objects used by the discovery
algorithm. For candidate intervention $i$, $\rho_i$ defines asymmetric
influence scores by measuring how the $i$-interventional change of measure
transports mass in each visible coordinate. Its score or flow velocity defines
a local causal vector field,
\[
    v_i(z)\approx \nabla_z \ell_i(z)
    \quad\text{or}\quad
    v_i(z,t)=\frac{\partial}{\partial t}g_i(z,t),
\]
depending on whether one uses a static density-ratio model or a continuous
flow. Differentiating these fields gives the local Jacobians needed for Lie
brackets $[v_i,v_j]$ and Frobenius residuals. Thus the density engine is not a
separate causal-effect module; it is the numerical source of directed
influence, latent-curvature diagnostics, and the geometry-pruned candidate mask.

\section{\textsc{Bridge} Response Fields and Differential Geometry}
\label{sec:differential-geometric}

When these operators are parameterized using Continuous Normalizing Flows (CNFs), the framework takes on a clear differential-geometric meaning. The space of smooth, mutually absolute-continuous probability distributions defines a statistical manifold $\M$.

A differentiable path of regime laws has a tangent represented by a mean-zero score in the statistical tangent space, while a transport model supplies a vector field on sample space. We use one of two explicitly distinct objects:
\[
 s_e(z)=\nabla_z\log\rho_e(z)
 \qquad\text{or}\qquad
 v_e(z,t)=\partial_t g_e(z,t).
\]
The first is a spatial score-difference field; the second is a learned transport velocity satisfying a continuity equation for the modeled path. They are related only under additional choices of dynamics and must not be equated with the scalar $\rho_e-1$. Integrating $v_e$ produces the learned transport path; it is not asserted to be a unique geodesic.

\subsection{\v{C}encov's Invariance and the Metric Energy Cost}

To evaluate the informational energy cost of a causal transformation along a vector path, the statistical manifold must be equipped with a metric tensor $g_{ij}$. This choice is uniquely locked by \v{C}encov's Theorem \citep{chentsov1982}. 

\begin{theorem}[\v{C}encov]
\label{cencov}
Up to an overall scale, the Fisher information metric is the unique Riemannian metric on finite-dimensional statistical models invariant under congruent Markov morphisms, subject to the theorem's regularity assumptions.
\end{theorem}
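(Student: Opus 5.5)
The plan follows Čencov's original strategy, specialised to the finite setting where it is cleanest: the simplices $\Delta_{n-1}=\{p\in\R^n_{>0}:\sum_k p_k=1\}$ with Markov morphisms between them. This form is what "subject to the theorem's regularity assumptions" refers to; the extension to smooth parametric models follows by pullback along the embedding. So let $g^{(n)}$ be a family of Riemannian metrics on the open simplices, invariant under congruent Markov embeddings $f:\Delta_{m-1}\to\Delta_{n-1}$. These are the linear maps that split each atom $k$ into several atoms with fixed proportions $q_{kj}$, so $f(p)_{kj}=p_k q_{kj}$. Invariance means $f^*g^{(n)}=g^{(m)}$. The goal is to show $g^{(n)}_p(u,v)=c\sum_k u_kv_k/p_k$ for a single constant $c>0$, where $u,v$ are tangent vectors with $\sum_k u_k=\sum_k v_k=0$.

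The first step uses symmetry at the barycenter. Permutations of atoms are congruent embeddings $\Delta_{n-1}\to\Delta_{n-1}$ that fix the uniform point $e_n=(1/n,\dots,1/n)$. Hence $g^{(n)}_{e_n}$ is an $S_n$-invariant inner product on the hyperplane $\{\sum u_k=0\}$. This hyperplane is an irreducible $S_n$-representation, so by Schur's lemma $g^{(n)}_{e_n}=a_n\langle\cdot,\cdot\rangle_{\mathrm{Euclid}}$ for some scalar $a_n>0$.

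The second step transports the metric from the barycenter to rational points. Take a rational point $p=(k_1/N,\dots,k_n/N)$ and the congruent embedding $f_p:\Delta_{n-1}\to\Delta_{N-1}$ that splits atom $i$ into $k_i$ equal pieces. Then $f_p(p)=e_N$ and $df_p(u)$ has entries $u_i/k_i$, repeated $k_i$ times. Invariance gives
\[
g^{(n)}_p(u,v)=a_N\sum_i k_i\frac{u_iv_i}{k_i^2}=\frac{a_N}{N}\sum_i \frac{u_iv_i}{p_i}.
\]
Applying the same computation with $p=e_n$ and $N=n\ell$ shows $a_n/n=a_{n\ell}/(n\ell)$. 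Taking a common multiple of any two indices then shows $c:=a_N/N$ is independent of $N$. So $g^{(n)}_p=c\,g^{\mathrm{Fisher}}_p$ at every rational $p$.

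The third step extends to all $p$. The rational points are dense and the metric is continuous, which is the regularity assumption. Both sides of the identity are continuous in $p$, so the equality holds on the whole open simplex. For a general regular parametric model $\theta\mapsto p_\theta$, one either works with finite-sample-space models, which are submanifolds of a simplex, or invokes the infinite-dimensional extension due to Ay--Jost--Lê--Schwachhöfer. Pulling back along the inclusion then gives $c\,I(\theta)$, the Fisher metric up to scale.

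The step I expect to be hardest is making the second step consistent. One must check that the value obtained at $p$ does not depend on which representation $k_i/N$ is chosen; this is exactly what the scaling argument for $a_N$ settles. One must also confirm that the invariance hypothesis really covers non-injective-looking splittings, which requires setting up the category of Markov morphisms carefully. Schur's lemma and the density argument are routine. Finally, the converse, that the Fisher metric is indeed invariant, is a direct check: $\sum_{k,j}(u_kq_{kj})(v_kq_{kj})/(p_kq_{kj})=\sum_k u_kv_k/p_k$ because $\sum_j q_{kj}=1$.
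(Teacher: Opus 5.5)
The paper does not prove this statement. Theorem~\ref{cencov} is imported by citation to \v{C}encov's monograph, solely to motivate the Fisher kinetic regularizer, and its hypotheses are left as ``the theorem's regularity assumptions.'' Your proposal is a correct, self-contained proof of the finite-sample-space version, and it is the standard argument. Permutation invariance at $e_n$ gives $g^{(n)}_{e_n}=a_n\langle\cdot,\cdot\rangle$; over $\mathbb{R}$, apply Schur to the self-adjoint operator representing the form, so that its real eigenspaces are $S_n$-invariant. The splitting $f_p$ with $q_{ij}=1/k_i$ gives $g^{(n)}_p=(a_N/N)\sum_i u_iv_i/p_i$, and the case $p=e_n$, $N=n\ell$ gives $a_{n\ell}=\ell a_n$. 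Continuity and density then finish the argument. What your proof buys over the bare citation is a precise reading of what the paper leaves vague. The invariant object is a family $\{g^{(n)}\}$ over all finite sample spaces, linked by congruent embeddings and living on open simplices, and ``regularity'' amounts to continuity of the metric. The difficulty you flag is milder than you think: $g^{(n)}_p$ is a fixed tensor, so independence of the representation $k_i/N$ is automatic. What the scaling identity actually delivers is that $a_N/N$ is one constant across points and across $n$. Likewise, $f_p$ and the permutations are literally of the form $p_kq_{kj}$, so no extra care about the morphism class is needed. Two scope remarks are worth recording. First, invariance alone does not constrain a metric on a parametric submodel, because $\theta\mapsto p_\theta$ is not a Markov morphism. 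Your pullback step is nonetheless right, because in \v{C}encov's and Ay--Jost--L\^e--Schwachh\"ofer's formulations a model's metric is by definition the pullback of the invariant ambient tensor. Second, the setting the paper actually uses, flows with densities on $\mathbb{R}^d$, lies outside your self-contained argument. Uniqueness there rests on the infinite-dimensional extension you cite, which is exactly the load the paper's citation delegates.
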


\v{C}encov's theorem motivates using the Fisher metric for a coordinate-invariant statistical energy cost:
\begin{equation}
    E = \int \langle v, v \rangle_{\text{Fisher}} \, dt
    \label{eq:energy}
\end{equation}
This energy is coordinate-free under the Markov morphisms covered by \v{C}encov's invariance theorem. In the population limit, and when the neural path class contains the relevant Fisher geodesic, minimizing Equation~\ref{eq:energy} selects the minimum-energy representative among observational-to-interventional transports. In finite samples and restricted architectures, it should be read as a geometrically natural regularizer rather than as a certificate that training has found the exact geodesic.

\subsection{Neural Flow Training Algorithm and Loss Formulations}
To estimate the regime changes, we employ an invertible neural architecture---a family of continuous normalizing flows (CNFs) or standard autoregressive flows---to parameterize the probability laws.

\subsection{Architectural Parameterization}
Let $g_\theta: \R^d \to \R^d$ be a bijective, differentiable neural mapping parameterized by weights $\theta$. We instantiate two independent flow networks to capture the respective geometries of the observational and interventional manifolds:
\begin{align}
    p_{\text{obs}}(z; \theta_{\text{obs}}) &= p_0(g_{\theta_{\text{obs}}}^{-1}(z)) \left| \det \nabla_z g_{\theta_{\text{obs}}}^{-1}(z) \right| \\
    p_{\text{do}}(z; \theta_{\text{do}}) &= p_0(g_{\theta_{\text{do}}}^{-1}(z)) \left| \det \nabla_z g_{\theta_{\text{do}}}^{-1}(z) \right|
\end{align}
where $p_0$ is a standard base distribution (e.g., an isotropic Gaussian). The continuous Radon--Nikodym derivative field is evaluated pointwise as the change-of-variables ratio:
\begin{equation}
    \rho_\theta(z) = \frac{p_{\text{do}}(z; \theta_{\text{do}})}{p_{\text{obs}}(z; \theta_{\text{obs}})}
\end{equation}

\subsection{The Calibration Objective}
Rather than relying only on independent maximum-likelihood fits, we regularize the ratio by the change-of-measure calibration identity: reweighting observational samples by $\rho_\theta$ should reproduce interventional moments within the chosen function class.

To enforce this globally without hand-picking individual test functions $f$, we construct a dual-network adversarial loss or use a Maximum Mean Discrepancy (MMD) kernel objective. The empirical calibration loss $\mathcal{L}_{\text{cal}}$ over a reproducing kernel Hilbert space $\mathcal{H}$ with kernel $k(z, z')$ is formulated as:
\begin{align}
    \mathcal{L}_{\text{cal}}(\theta_{\text{obs}}, \theta_{\text{do}}) = &\frac{1}{N^2}\sum_{i=1}^N\sum_{j=1}^N k(z_i^{\text{do}}, z_j^{\text{do}}) \nonumber \\
    &- \frac{2}{NM}\sum_{i=1}^N\sum_{m=1}^M \rho_\theta(z_m^{\text{obs}})\, k(z_i^{\text{do}}, z_m^{\text{obs}}) \nonumber \\
    &+ \frac{1}{M^2}\sum_{m=1}^M\sum_{n=1}^M \rho_\theta(z_m^{\text{obs}})\rho_\theta(z_n^{\text{obs}})\, k(z_m^{\text{obs}}, z_n^{\text{obs}})
\end{align}
where $\{z_i^{\text{do}}\}_{i=1}^N \sim \ddo$ and $\{z_m^{\text{obs}}\}_{m=1}^M \sim \dobs$. Minimizing $\mathcal{L}_{\text{cal}}$ matches the reweighted empirical distribution to the regime sample in the chosen RKHS. Finite-sample calibration in one kernel class is not exact equality of the underlying laws.

\subsection{Information-Geometric Path Regularization}
To bias the network toward information-efficient paths in the Fisher geometry singled out by \v{C}encov's Theorem, we add a kinetic path penalty. For a continuous flow governed by the vector field $v_\theta(z, t) = \frac{\partial}{\partial t} g_\theta(z, t)$, the Fisher energy regularization term $\mathcal{L}_{\text{Fisher}}$ is defined as:
\begin{equation}
    \mathcal{L}_{\text{Fisher}}(\theta) = \int_0^1 \E_{p(z, t)} \left[ \| v_\theta(z, t) \|^2_2 \right] dt
\end{equation}
The total optimized loss function combines both terms using a structural scaling hyperparameter $\lambda$:
\begin{equation}
    \mathcal{L}_{\text{total}} = \mathcal{L}_{\text{cal}}(\theta_{\text{obs}}, \theta_{\text{do}}) + \lambda \mathcal{L}_{\text{Fisher}}(\theta_{\text{do}})
\end{equation}

\subsection{A Neural-Flow \textsc{Bridge} Training Procedure}
The complete optimization routine is detailed in Algorithm \ref{alg:flow_training}. This is one \textsc{Bridge} instance: a neural-flow density engine that estimates calibrated Radon--Nikodym fields before the Lie/Frobenius screening stage. Other \textsc{Bridge} variants can replace this training block with any interventional density estimator that provides the log-ratio fields and derivatives needed for directed influence scores and bracket residuals.

\begin{algorithm}[H]
\caption{\textsc{Bridge}-Flow: Neural Calibration of Interventional Density Fields}\label{alg:flow_training}
\begin{algorithmic}[1]
\REQUIRE Observational dataset $\mathcal{D}_{\text{obs}}$, interventional dataset $\mathcal{D}_{\text{do}}$, learning rate $\eta$, hyperparameter $\lambda$.
\ENSURE Calibrated causal density field parameters $\theta_{\text{obs}}, \theta_{\text{do}}$.
\STATE Initialize flow network weights $\theta_{\text{obs}}$ and $\theta_{\text{do}}$ randomly.
\WHILE{not converged}
    \STATE Sample mini-batch $\{z_m^{\text{obs}}\}_{m=1}^M$ from $\mathcal{D}_{\text{obs}}$ and $\{z_i^{\text{do}}\}_{i=1}^N$ from $\mathcal{D}_{\text{do}}$.
    \STATE Compute log-likelihood profiles $p_{\text{obs}}(z_m^{\text{obs}})$ and $p_{\text{do}}(z_i^{\text{do}})$.
    \STATE Evaluate localized Radon--Nikodym weight vectors: $\rho_\theta(z_m^{\text{obs}}) \leftarrow p_{\text{do}}(z_m^{\text{obs}})/p_{\text{obs}}(z_m^{\text{obs}})$.
    \STATE Compute calibration error $\mathcal{L}_{\text{cal}}$ using the kernelized MMD matrix framework.
    \STATE Integrate continuous velocity fields to calculate the Fisher kinetic cost $\mathcal{L}_{\text{Fisher}}$.
    \STATE Update total loss: $\mathcal{L}_{\text{total}} \leftarrow \mathcal{L}_{\text{cal}} + \lambda \mathcal{L}_{\text{Fisher}}$.
    \STATE Stochastic gradient step: $\theta_{\text{obs}} \leftarrow \theta_{\text{obs}} - \eta \nabla_{\theta_{\text{obs}}} \mathcal{L}_{\text{total}}$.
    \STATE Stochastic gradient step: $\theta_{\text{do}} \leftarrow \theta_{\text{do}} - \eta \nabla_{\theta_{\text{do}}} \mathcal{L}_{\text{total}}$.
\ENDWHILE
\RETURN $\theta_{\text{obs}}, \theta_{\text{do}}$
\end{algorithmic}
\end{algorithm}

\section{Causal Discovery via Asymmetric Flow Fields and Lie Brackets}
\label{sec:causal-discovery}
Continuous causal discovery methods such as NOTEARS \citep{notears} enforce acyclicity through a global matrix constraint. \textsc{Bridge} instead uses localized response operators to screen candidate arrows before an acyclicity-aware learner is applied. This changes the size of the family presented to the downstream method; it does not by itself enforce acyclicity.

\subsection{Asymmetric Edge Scoring via Residual Variance Reduction}
Under a correctly targeted intervention and standard SCM semantics, manipulating an ancestor can change a descendant's distribution, whereas manipulating a descendant does not change an ancestor's structural mechanism. Distributional equality can nevertheless occur through cancellation, and indirect paths or latent common causes can also create large pairwise responses. We therefore use directional response only as a screening statistic.

In our geometric framework, this ancestral asymmetry maps directly onto the behavior of the calibrated Radon--Nikodym derivative field $\rho_i(z)$. Instead of dynamically swapping discrete graph edges, we evaluate the directed causal edge score $S(X_i \to X_j)$ by measuring the localized predictive variance reduction under the interventional change of measure:

\begin{equation}
    S(X_i \to X_j) = \log \left( \frac{\text{Var}_{\dobs}(X_j)}{\text{Var}_{\dobs}(X_j \cdot \rho_i(Z))} \right)
\end{equation}

This heuristic behaves as a directional information filter in favorable regimes:
\begin{itemize}
    \item A large $S(X_i\to X_j)$ indicates that the $i$-regime reweighting substantially changes the variance profile of $X_j$ and merits retaining $i\to j$ as a candidate.
    \item A small score is evidence against a detectable effect at the available intervention strength and sample size, not proof of the absence of a direct arrow.
\end{itemize}

\subsection{Lie-Bracket Closure as a Diagnostic}
To reduce the burden placed on global matrix regularizers, the framework evaluates the commutativity of the learned interventional vector fields before downstream graph scoring. Let $v_i, v_j \in T\M$ be the smooth tangent vector fields corresponding to the continuous normalizing flow updates for interventions on $X_i$ and $X_j$, respectively.

The local order-sensitivity of the intervention fields is measured by the \textbf{Lie Bracket}, defined as:
\begin{equation}
    [v_i, v_j] = \nabla v_j \cdot v_i - \nabla v_i \cdot v_j
\end{equation}
For machine-learning readers, a useful intuition is to view $v_i$ as a local current associated with regime $i$ and $v_j$ as a response field transported through that current. Arnold and Khesin describe the bracket as a ``fisherman derivative'' \citep{arnold1999topological}. The bracket measures noncommutativity of these \emph{learned transport fields}, not noncommutativity of fixed do-operations. Its value depends on how finite regime changes are embedded into smooth flows and extended away from observed points. A nonzero residual therefore diagnoses order sensitivity of the representation and failure of visible closure, not a uniquely causal mechanism.

\begin{figure}[t]
\centering
\begin{tikzpicture}[
    >=Latex,
    font=\footnotesize,
    flow/.style={blue!55!black, thick, ->},
    response/.style={green!45!black, very thick, ->},
    residual/.style={red!70!black, very thick, ->},
    dot/.style={circle, fill=black!70, inner sep=1.4pt},
    panel/.style={draw=black!18, rounded corners=3pt, fill=black!2},
    labelbox/.style={fill=white, fill opacity=0.88, text opacity=1,
        inner sep=1.5pt, rounded corners=1.5pt},
    leader/.style={black!45, thick}
]
    \begin{scope}[xshift=-0.4cm]
        \fill[blue!8, rounded corners=4pt] (-0.25,-0.85) rectangle (6.15,2.9);
        \node[anchor=west, text=black!65] at (0,2.55) {observational statistical manifold};

        \draw[blue!45!black, thick]
            (0,0.15) .. controls (1.1,0.65) and (2.1,-0.15) .. (3.25,0.35)
            .. controls (4.25,0.8) and (5.0,0.2) .. (5.85,0.55);
        \draw[blue!45!black, thick]
            (0,-0.45) .. controls (1.1,-0.05) and (2.1,-0.75) .. (3.25,-0.25)
            .. controls (4.25,0.2) and (5.0,-0.4) .. (5.85,-0.05);
        \draw[blue!45!black, thick]
            (0,0.85) .. controls (1.15,1.25) and (2.15,0.55) .. (3.35,1.05)
            .. controls (4.15,1.35) and (5.15,0.85) .. (5.85,1.2);

        \node[dot] (p) at (2.35,0.18) {};
        \node[labelbox, above left=1pt and 2pt of p] {$P$};
        \draw[flow] (p) -- ++(1.15,0.25)
            node[labelbox, pos=0.66, above, sloped] {$v_i$};
        \draw[response] (p) -- ++(0.55,1.05)
            node[labelbox, pos=0.58, left] {$v_j$};
        \draw[response, opacity=0.55] (3.45,0.42) -- ++(0.48,0.82);
        \node[labelbox, anchor=west, align=left, text=green!45!black] at (4.52,1.23)
            {transported\\$v_j$};
        \draw[leader, green!45!black] (4.37,1.11) -- (3.88,0.98);
        \draw[residual] (3.45,0.42) -- ++(0.95,-0.5)
            node[labelbox, pos=0.83, below, sloped] {drag $[v_i,v_j]_{\perp}$};

        \draw[leader] (1.74,1.86) -- (2.16,1.26) -- (2.35,0.18);
        \node[labelbox, align=center, text=black!65] at (0.82,1.88)
            {observer moves\\with $v_i$};
    \end{scope}

    \begin{scope}[xshift=6.7cm]
        \node[panel, minimum width=3.4cm, minimum height=1.35cm, anchor=north west] (a) at (0,2.75) {};
        \node[labelbox, anchor=west] at (0.18,2.48) {$do(X_i)$ then $do(X_j)$};
        \node[dot] (a0) at (0.55,1.58) {};
        \draw[flow] (a0) -- ++(0.8,0.18);
        \draw[response] (1.35,1.76) -- ++(0.58,0.43);
        \node[dot, fill=green!45!black] at (1.93,2.19) {};

        \node[panel, minimum width=3.4cm, minimum height=1.35cm, anchor=north west] (b) at (0,1.0) {};
        \node[labelbox, anchor=west] at (0.18,0.72) {$do(X_j)$ then $do(X_i)$};
        \node[dot] (b0) at (0.55,-0.03) {};
        \draw[response] (b0) -- ++(0.58,0.43);
        \draw[flow] (1.13,0.40) -- ++(0.8,0.18);
        \node[dot, fill=red!70!black] at (1.93,0.58) {};

        \draw[residual] (2.55,2.02) -- (2.55,0.75);
        \node[labelbox, anchor=west, align=left, text=red!70!black] at (3,1.42)
            {endpoint mismatch\\nonzero bracket};
        \node[labelbox, align=center, text=black!70] at (1.7,-0.95)
            {visible order-sensitivity\\flags latent obstruction};
    \end{scope}
\end{tikzpicture}
\caption{Fisherman-derivative intuition for the learned response fields. The residual $[v_i,v_j]_{\perp}$ compares two learned flow orders. It concerns the chosen transport representation, not the algebra of fixed do-assignments; \textsc{Bridge} uses it as a nonspecific visible-closure diagnostic before downstream search.}
\label{fig:fisherman_bridge_intuition}
\end{figure}

The Lie bracket therefore measures local order sensitivity of the modeled regime transports.
\begin{enumerate}
    \item \textbf{Closure:} If $[v_i,v_j]$ lies in the modeled visible span, the pair is locally compatible with that response-field distribution.
    \item \textbf{Nonclosure:} An orthogonal residual can reflect latent forcing, target mismatch, domain shift, weak overlap, insufficient adjustment, field-estimation error, or an omitted nonlinear coordinate.
\end{enumerate}

Frobenius's theorem \citep{lee2012smooth} makes involutive closure the condition for local integrability of the distribution spanned by smooth fields. We use this as a diagnostic principle. Failure to close flags an incomplete or misspecified visible description; it does not prove global acyclicity or latent confounding. \textsc{Bridge} uses closure as a local screen before acyclicity-aware or latent-aware discovery is applied.

\paragraph{\textsc{Bridge} running time.}
Let $d$ be the number of observed variables, $n$ the number of evaluation samples, $m$ the number of directed arrows retained by the geometric mask, $r_j$ the number of candidate parents retained for target $j$, $r=\max_j r_j$, and $k$ the maximum indegree allowed by the downstream scorer. The \textsc{Bridge} geometric layer is best separated into a field-estimation cost $C_{\mathrm{field}}$ and a bracket/screening cost. In the current kernel/local-linear teacher implementation, estimating all ordered pairwise response fields costs
\[
  C_{\mathrm{field}}
  =
  O\!\left(d^2 n (n a+a^3)\right)
\]
with $a$ adjustment coordinates. If a single anchor intervention field $v_i$ is fitted per node and then reused for all target coordinates, the leading field-estimation term is instead $O(d n(na+a^3))$, although the full directed screen still evaluates $d(d-1)$ coordinate influences. A continuous-normalizing-flow RN-ratio variant adds an inference term $O(d n K)$ for $K$ ODE steps per field; a conditional-flow-matching or other amortized velocity model replaces this by $O(d n P_\Theta)$ field evaluation for a network with $P_\Theta$ parameters. Given kernel-estimated fields, the released NumPy bracket screen uses smoothed finite differences and has the higher teacher-style cost
\[
  O(d^3 n^2) + O(d^2 n s^3),
\]
where $s=2$ for pair-span projection and $s=d$ for all-field projection. In an amortized autodiff implementation, however, all field Jacobians can be obtained with Jacobian-vector products at cost about $O(d n P_\Theta)$, and assembling all pair brackets costs $O(d^3 n)$ for the resulting matrix-vector products. Influence scoring and mask construction are lower-order, $O(d^2 n+d^2\log d)$, and optional exact counting of acyclic subgraphs costs $O(2^m(d+m))$ but is capped in the released code.

The downstream term depends on which scorer is plugged into the mask. For the exact TCES/BIC scorer used in the small controlled experiments, the candidate parent sets for node $j$ are
\[
  P_j=\sum_{\ell=0}^{\min(k,r_j)} \binom{r_j}{\ell},
\]
so local Gaussian BIC scores cost $\sum_j P_j\,O(nk^2+k^3)$ after caching, while exact graph enumeration costs $\prod_j P_j$ acyclicity checks and optional TCES/sheaf penalties. This is still exponential in $d$ in the worst case, but the exponent is now controlled by the retained mask and bounded indegree rather than by the full $d(d-1)$ arrow set. For the ordered local BIC diagnostic in the ten-node random-DAG experiment, the graph product disappears: one scores $\sum_j P_j=O(d r^k)$ parent sets when $r_j\le r$ and $k$ is fixed. For the greedy GES-style pruned search, each hill-climb step scans additions, removals, and reversals only inside the mask, giving roughly $O(L\,m\,(C_{\mathrm{local}}+d+m))$ for $L$ accepted or attempted search rounds, with cached local scores and cycle checks. Thus \textsc{Bridge} is best understood as a polynomial geometric front end followed by a downstream search whose residual combinatorics are functions of $(m,r_j,k)$, not of the unrestricted labeled-DAG family. This is why the ten-node experiment can run quickly after screening: the retained masks have about $18$--$22$ arrows and the ordered BIC scorer evaluates only bounded-size parent sets.
When the mask itself has at most $k$ candidate parents per node, this simplifies to $O(d\,2^k)$ parent-set evaluations; otherwise $O(d r^k)$ is the more conservative bound. In summary, the amortized \textsc{Bridge}-hybrid path has the schematic complexity
\[
  \widetilde O\!\left(d^3 n + d r^k\right),
\]
after field learning, with the bounded-mask special case $\widetilde O(d^3 n+d\,2^k)$. This should be contrasted with unconstrained score search over a super-exponential DAG family, or over Markov equivalence classes whose count remains a constant fraction of the number of DAGs \citep{gillispie2002size,schmid2024number}. The geometric layer therefore does not eliminate all combinatorics; it collapses the global exponential into a bounded local parent-set search inside a continuous, polynomial-time screen.

The following result concerns screen retention only. Downstream identification is separated into a corollary because it depends entirely on the estimand and assumptions of the chosen learner.

\begin{proposition}[Geometry-Screen Retention]
\label{geom-screen}
Let $v_i$ denote the population intervention vector field for intervention $i$, and define the population influence score
\[
    I_{ij}=\left(\mathbb{E}_{P}\big[(v_i)_j^2\big]\right)^{1/2}.
\]
Let $R_{ij}$ be the population closure residual obtained by projecting $[v_i,v_j]$ onto the chosen visible span. Suppose: (i) each arrow in a prespecified population target set $E^\star$ lies among the top-$k$ population influences into its endpoint with margin $\gamma>0$; (ii) the estimated fields $\widehat v_i$ converge uniformly to $v_i$ in $L^2(P)$; and (iii) the estimated bracket residuals $\widehat R_{ij}$ are uniformly consistent. Then the screen retains every member of $E^\star$ with probability tending to one. If two prespecified diagnostic classes satisfy $R_{ij}>\tau+\gamma$ and $R_{ij}<\tau-\gamma$, respectively, thresholding $\widehat R_{ij}$ at $\tau$ consistently separates those classes. The proposition does not identify either class with latent confounding without a separate specificity argument.
\end{proposition}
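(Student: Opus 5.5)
The plan is to reduce both claims to a uniform-consistency-plus-margin argument over finitely many indices. The screen's statistics are the estimated influences $\widehat I_{ij}=(\mathbb E_P[(\widehat v_i)_j^2])^{1/2}$, or their empirical analogues, and the estimated residuals $\widehat R_{ij}$. Both are indexed by the finite set of ordered pairs $(i,j)$ with $i\neq j$, so the number of events we need to control is fixed as $n\to\infty$.

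The first step converts assumption (ii) into uniform consistency of the influence scores. By the reverse triangle inequality in $L^2(P)$,
\[
  \bigl|\widehat I_{ij}-I_{ij}\bigr|
  =\Bigl|\,\|(\widehat v_i)_j\|_{L^2(P)}-\|(v_i)_j\|_{L^2(P)}\Bigr|
  \le \|\widehat v_i-v_i\|_{L^2(P)},
\]
so $\max_{i,j}|\widehat I_{ij}-I_{ij}|\to0$ in probability. If the screen uses an empirical average in place of the population expectation, I would add a uniform law of large numbers for $(\widehat v_i)_j^2$. This requires a mild envelope or finite-moment condition, which I would state explicitly as part of what (ii) means.

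The second step is the margin argument. I read assumption (i) as follows. For each endpoint $j$ and each arrow $i\to j$ in $E^\star$, $I_{ij}$ exceeds the $(k+1)$-st largest population influence into $j$ by at least $\gamma$, so $I_{ij}\ge I_{i'j}+\gamma$ for every $i'$ outside the population top-$k$ set. Now consider the event $\mathcal A_n=\{\max_{i,j}|\widehat I_{ij}-I_{ij}|<\gamma/2\}$. On $\mathcal A_n$ we have $\widehat I_{ij}>\widehat I_{i'j}$ for all such $i'$. Hence at most $k-1$ other sources can outrank $i$, and $i$ stays in the estimated top-$k$ set for $j$. Since $P(\mathcal A_n)\to1$, every member of $E^\star$ is retained with probability tending to one. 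Ties are excluded automatically by the strict inequality on $\mathcal A_n$.

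The third step is the residual separation claim, which follows the same pattern using assumption (iii). On the event $\mathcal B_n=\{\max_{i,j}|\widehat R_{ij}-R_{ij}|<\gamma\}$, any pair with $R_{ij}>\tau+\gamma$ has $\widehat R_{ij}>\tau$. Likewise any pair with $R_{ij}<\tau-\gamma$ has $\widehat R_{ij}<\tau$. So thresholding at $\tau$ classifies both prespecified classes correctly, and this happens with probability $P(\mathcal B_n)\to1$. If one wants retention and separation to hold simultaneously, intersecting $\mathcal A_n$ and $\mathcal B_n$ gives a single high-probability event. The final sentence of the proposition needs no proof. It only notes that nothing in this argument connects the two classes to latent structure.

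The main obstacle is interpretive rather than technical. It lies in pinning down what "top-$k$ with margin $\gamma$" and "uniformly consistent" mean precisely enough for the $\gamma/2$ argument to close, and in handling the case where the screen uses empirical rather than population influences. My first move there is to state the margin as a gap to the $(k+1)$-st order statistic, and to fold the empirical-average error into the uniform bound via a union bound over the $d(d-1)$ pairs.
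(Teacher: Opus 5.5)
Your proposal is correct and follows the paper's argument. Both use uniform consistency over the finitely many pairs, a $\gamma/2$ margin so estimation error cannot reverse the population top-$k$ ordering, a $\gamma$ margin for thresholding at $\tau$, and an intersection of the resulting high-probability events. The differences are only in bookkeeping: you make the influence step explicit via the reverse triangle inequality in $L^2(P)$ and take $\max_{i,j}|\widehat R_{ij}-R_{ij}|\to 0$ directly from hypothesis (iii), whereas the paper uses empirical averages with a Glivenko--Cantelli condition and re-derives residual consistency from $C^1$ continuity of the bracket map $(v_i,v_j)\mapsto Dv_j\,v_i-Dv_i\,v_j$.
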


\begin{proof}
Let $\widehat I_{ij}=(\mathbb{E}_n[(\widehat v_i)_j^2])^{1/2}$ denote the empirical influence score. Under uniform $L^2(P)$ convergence of $\widehat v_i$ to $v_i$, bounded second moments, and the usual Glivenko--Cantelli condition for the finite collection of squared coordinate fields used by the screen, $\max_{i,j}|\widehat I_{ij}-I_{ij}|\to 0$ in probability. Choose $\epsilon<\gamma/2$. With probability tending to one, the empirical scores are all within $\epsilon$ of their population values. On this event, the population top-$k$ margin cannot be reversed by estimation error, so every member of $E^\star$ remains top-$k$ empirically.

For the Frobenius residuals, assume the intervention fields lie in a bounded $C^1$ class on the compact region carrying the relevant probability mass, or more generally that the local Jacobian and projection estimators are uniformly consistent in the norm used to form $R_{ij}$. The Lie bracket map
\[
    (v_i,v_j)\mapsto [v_i,v_j]=D v_j\,v_i-D v_i\,v_j
\]
is continuous in the corresponding $C^1$ norm. Therefore the estimated brackets and their visible-span projections converge uniformly, giving $\max_{i,j}|\widehat R_{ij}-R_{ij}|\to 0$ in probability. If the prespecified diagnostic classes are separated by a margin around $\tau$, then for all sufficiently large samples the empirical threshold at $\tau$ has the same classification as the population threshold.

Combining the two events, the retained candidate family contains $E^\star$ with probability tending to one. This is a screening guarantee, not a finite-sample orientation or latent-identification theorem.
\end{proof}

\begin{corollary}[Conditional downstream selection]
\label{downstream-selection}
Suppose the hypotheses of Proposition~\ref{geom-screen} hold and a downstream learner is consistent for an estimand $\mathcal G^\star$ whenever its candidate family contains the features required to represent $\mathcal G^\star$. Then the two-stage procedure is consistent for $\mathcal G^\star$. Here $\mathcal G^\star$ may be a PAG or interventional equivalence-class feature set for a latent-aware learner, or a unique DAG only under assumptions that identify one.
\end{corollary}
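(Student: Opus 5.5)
The argument decomposes the two-stage procedure along the event where the screen succeeds. Let $\widehat M_n$ be the candidate mask returned by the geometric screen and $\widehat{\mathcal G}_n$ the downstream learner's output on sample size $n$. Define
\[
A_n=\{E^\star\subseteq \widehat M_n\}, \qquad B_n=\{\widehat{\mathcal G}_n=\mathcal G^\star\}
\]
(or $\widehat{\mathcal G}_n$ within a shrinking neighbourhood of $\mathcal G^\star$, if consistency is metric rather than exact). Then
\[
P(B_n^c)\le P(A_n^c)+P(B_n^c\cap A_n).
\]
It suffices to show that each term tends to zero.

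\textbf{First term.} Fix $E^\star$ to be the set of features (arrows, or arrow-level candidates for adjacencies and endpoint marks) that the downstream learner needs in order to represent $\mathcal G^\star$. Proposition~\ref{geom-screen} gives $P(A_n)\to1$ directly. The bracket-residual separation part of the proposition is not needed here unless the mask also uses the residual threshold. In that case the same uniform-consistency event covers it, since the two classes are separated by a margin around $\tau$.

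\textbf{Second term.} This is the main obstacle, because the mask $\widehat M_n$ is random and built from data that may overlap with the data the learner uses. The hypothesis on the learner is a statement about a fixed candidate family. I would first try the following bound. The mask ranges over the finite set $\mathcal F$ of subsets of the $d(d-1)$ ordered pairs. For each fixed $M\in\mathcal F$ with $E^\star\subseteq M$, the hypothesis gives $P(\widehat{\mathcal G}_n(M)\ne\mathcal G^\star)\to0$. Hence
\[
P(B_n^c\cap A_n)\le \sum_{M\in\mathcal F,\;M\supseteq E^\star} P\bigl(\widehat{\mathcal G}_n(M)\ne\mathcal G^\star\bigr),
\]
a finite sum of vanishing terms. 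This controls the event that the learner errs on whichever mask was realized, without requiring independence between the screen and the learner. If one prefers, sample splitting between the screen and the learner gives the same conclusion by conditioning. Finiteness of $\mathcal F$ for fixed $d$ is the key point.

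\textbf{Conclusion and caveat.} Combining the two bounds gives $P(B_n)\to1$. I would end by noting the scope: the estimand $\mathcal G^\star$ and its identifiability are inherited entirely from the downstream learner's assumptions. A unique DAG is obtained only when those assumptions identify one; for a latent-aware learner the estimand is PAG or $\Psi$-PAG features. This matches the interpretive restriction in the statement.
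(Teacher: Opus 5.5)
Your proof is correct and follows the same two-step logic as the paper's proof. Proposition~\ref{geom-screen} puts the required features in the mask with probability tending to one, and the downstream learner's conditional consistency then gives the result; your finite union bound over the $2^{d(d-1)}$ possible masks adds a rigorous version of the step the paper leaves implicit, namely that fixed-family consistency transfers to a data-dependent mask built from possibly overlapping data.
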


\begin{proof}
Screen retention implies that the restricted family contains the representation required by the downstream estimand with probability tending to one. Conditional consistency of the downstream learner then gives the result. No additional identification is supplied by the screen itself.
\end{proof}

\section{Bracket Residuals as Nonspecific Visible-Closure Diagnostics}
\label{sec:latent-obstruction}
Let $\mathcal D_V(z)=\operatorname{span}\{v_1(z),\ldots,v_q(z)\}$ be the distribution spanned by the learned visible response fields, and let $\Pi_V(z)$ denote projection onto that span. We define the pairwise closure residual
\[
  r_{ij}(z)=\bigl(I-\Pi_V(z)\bigr)[v_i,v_j](z),
  \qquad
  R_{ij}^2=\mathbb E_{P^{(0)}}\|r_{ij}(Z)\|^2.
\]
The residual tests whether the fitted field family closes locally. It is not a curvature tensor, a topological invariant, or an identification functional for a hidden common cause.

\subsection{Why Latent Variables Can Contribute}
If an unobserved factor drives several measured mechanisms, projection onto observed coordinates can omit directions needed to express brackets of the induced response fields. This provides a plausible source of a nonzero $r_{ij}$. Under the restrictive additive-factor assumptions in Proposition~\ref{latent_prop}, the second moment of these residuals has a recoverable low-rank visible footprint.

\subsection{Why the Diagnostic Is Not Specific}
The same residual can be produced by a mislabeled or unknown intervention target, a domain change that is not a shared-SCM intervention, weak overlap and ratio instability, failure to condition on an available adjustment coordinate, misspecified or poorly estimated fields, or an insufficient coordinate dictionary. Conversely, a latent common cause need not produce a detectable residual: effects can cancel, the visible latent loadings can be rank deficient, or the chosen fields can close despite the latent variable. Revealing an adjustment variable may reduce a residual in an example, but no general theorem says that valid adjustment forces the bracket to vanish.

\subsection{Required Falsification Checks}
Before calling a pair latent-sensitive, we compare the bracket score to simpler diagnostics: the direct observational--interventional distribution gap, conditional-invariance tests, density-ratio overlap and effective sample size, field-calibration error, random or permuted tangent controls, and sensitivity to adding measured adjustment coordinates. For unknown soft targets, $\Psi$-FCI is a matched causal-discovery baseline; for context-indexed domains, JCI is the matched conceptual baseline. These checks do not make the bracket infallible, but they test whether it adds information beyond ordinary regime discrepancy and whether the latent interpretation survives obvious alternatives.

\subsection{Operational Role}
The current paper uses $R_{ij}$ to rescue or penalize candidates inside a high-recall mask and to summarize a residual subspace. Final causal interpretation comes from a regime-appropriate downstream learner and substantive assumptions. Thus ``latent-sensitive'' means only that latent structure is among the hypotheses not rejected by the diagnostic battery.

\section{\textsc{SKFM}: Spectral Kernel Flow Matching} 
\label{sec:skfm}

We propose \textbf{Spectral Kernel Flow Matching (SKFM)} as an amortized extension of the two-stage \textsc{Bridge} pipeline. \textsc{SKFM} replaces kernel-estimated fields with conditional-flow-matched fields, summarizes the Lie-bracket residual Gram matrix by a spectral projector, and imposes a soft triangular solvable-Lie penalty relative to a supplied or learned order. The chain and selected motif experiments show what is possible when this ordering assumption is correct. The non-chain and random-DAG ablations show the present limitation: field learning remains well calibrated, but Step~9, which discretizes continuous geometry into an adjacency, is sensitive to the learned order and extraction rule. \textsc{Bridge} with a downstream scorer is therefore the primary method; direct \textsc{SKFM} extraction is a scoped diagnostic endpoint.

\begin{figure}[h]
\centering
\resizebox{\textwidth}{!}{%
\begin{tikzpicture}[
    font=\small,
    box/.style={draw=black!25, fill=black!2, rounded corners=3pt, align=center, minimum width=3.0cm, minimum height=1.0cm, inner sep=5pt},
    flow/.style={-{Latex[length=2.5mm]}, line width=0.9pt, draw=blue!60!black},
    latent/.style={-{Latex[length=2.5mm]}, line width=0.9pt, dashed, draw=red!70!black},
    note/.style={font=\scriptsize, align=center, text=black!70}
]
\node[box] (data) at (0,0) {Observed and\\interventional samples};
\node[box] (cfm) at (4.0,0) {Amortized CFM\\field network $f_\theta(x,t,i)$};
\node[box] (bracket) at (8.2,0) {Autodiff Lie brackets\\$[v_i,v_j]$};
\node[box] (spectrum) at (12.2,0) {Residual Gram\\and spectral projector $E$};
\node[box] (dag) at (16.3,0) {Soft solvable-Lie\\DAG penalty};
\node[box] (graph) at (20.3,0) {Direct graph extraction\\no DAG enumeration};

\draw[flow] (data) -- (cfm);
\draw[flow] (cfm) -- (bracket);
\draw[latent] (bracket) -- node[above, note] {visible\\nonclosure} (spectrum);
\draw[flow] (spectrum) -- (dag);
\draw[flow] (dag) -- (graph);

\node[note] at (4.0,-1.05) {learns intervention\\vector fields};
\node[note] at (12.2,-1.05) {estimates low-rank\\residual footprint};
\node[note] at (18.3,1.05) {order-dependent\\direct extraction};
\end{tikzpicture}
}
\caption{\textsc{SKFM} as a direct counterpart to the main screening pipeline. It learns amortized response fields, factors their closure residuals spectrally, and applies an order-dependent graph extractor. The spectral coordinates summarize visible nonclosure and are not identified latent variables without Proposition~\ref{latent_prop}'s assumptions.}
\label{fig:skfm_pipeline}
\end{figure}

\subsection{From Screening to Direct Extraction}

However, the \textsc{Bridge} implementation evaluated in Section~\ref{sec:experiments} decouples these into:
\begin{itemize}
    \item \textbf{Stage 1:} Geometric screening via Frobenius anisotropy to produce a candidate edge mask $\mathcal{E}_{\text{cand}}$
    \item \textbf{Stage 2:} Discrete score-based search (GES/BIC/TCES) over the restricted DAG family $\mathcal{G}(\mathcal{E}_{\text{cand}})$
\end{itemize}

This separation motivates, but does not guarantee, a direct extractor:
\begin{enumerate}
    \item \textit{Discontinuity:} A hard residual threshold discards graded information about visible nonclosure.
    \item \textit{Residual structure:} Spectral summaries may reveal a stable low-rank footprint even when they do not identify latent variables.
    \item \textit{Combinatorial Remnant:} GES remains super-exponential in the worst case within the mask.
\end{enumerate}

\subsection{Proposed Algorithm: Spectral Kernel Flow Matching}

We integrate these stages by using triangular closure structure as an acyclicity regularizer and the orthogonal bracket component as a residual-footprint statistic. These are modeling choices; a general equivalence between DAGs and solvable algebras is not claimed.

Algorithm~\ref{alg:skfm} gives the resulting procedure. It first learns amortized response fields by conditional flow matching, then uses autodifferentiated Lie brackets to form a residual Gram matrix, extracts a low-rank footprint projector, and finally imposes a soft triangular penalty before reading off a selected visible graph. The learning and spectral steps are useful diagnostics; the fragile step is Step~9, which turns influence scores into a hard ordered adjacency. Robust order-free extraction remains open.

\begin{algorithm}
\caption{Spectral Kernel Flow Matching (SKFM-KT)}
\label{alg:skfm}
\begin{algorithmic}[1]
\REQUIRE Observational and known-target interventional datasets $\mathcal{D}_{\text{obs}}, \{\mathcal{D}_{\text{do}(i)}\}$
\STATE Initialize amortized flow network $\bm{f}_\Theta$ with embeddings $\bm{\phi}(i)$
\WHILE{not converged}
    \STATE Sample batch and compute $\mathcal{L}_{\text{CFM}}$ (flow matching)
    \STATE Compute Lie brackets $[\bm{v}_i, \bm{v}_j]$ via autodiff
    \STATE Form residual Gram matrix $\bm{G}$ and extract top-$m$ eigenvectors $\bm{E}$
    \STATE Compute $\mathcal{L}_{\text{DAG}}$ using structure constants
    \STATE Update $\Theta$ via Natural Gradient descent (Fisher metric)
\ENDWHILE
    \STATE Extract graph from the learned geometry: edge $(i \to j)$ exists if $\mathbb{E}[\|\partial \bm{v}_j / \partial x_i\|] > \epsilon$ and $i < j$ in the learned topological order.
\STATE Return selected adjacency $\bm{A}$ and residual-footprint projector $\bm{E}$
\end{algorithmic}
\end{algorithm}

\paragraph{Running time.}
Let $d$ be the number of observed variables, $n$ the number of sample points at which fields are evaluated, $T$ the number of flow-matching epochs, $B$ the CFM batch size, $B_{\mathrm{Lie}}$ the mini-batch used for the Lie penalty, $q$ the interval between Lie-penalty updates, and $P_\Theta$ the number of network parameters. The outer Lie/Frobenius layer contains $\binom d2=O(d^2)$ unordered field-pair comparisons, but the dominant cost depends on whether fields are supplied by the kernel teacher or by the amortized network. The kernel teacher requires $d(d-1)$ local regressions in the fully pairwise implementation; with $a$ adjustment coordinates this is
\[
  O\!\left(d^2 n (n a + a^3)\right),
\]
or $O(dn(na+a^3))$ when a single anchor field is fitted per node and reused across targets. Given precomputed kernel fields, a NumPy Lie-bracket screen evaluates two smoothed directional derivatives per pair and all $d$ output coordinates, giving
\[
  O(d^3 n^2) + O(d^2 n s^3)
\]
time, where $s=2$ for the pair span and $s=d$ for the all-field visible span used in the Frobenius projection. The amortized \textsc{SKFM} network changes this scaling. CFM training costs $O(T B P_\Theta)$, while periodic autodiff Lie diagnostics contribute approximately
\[
  O\!\left((T/q)\,d^2 P_\Theta\right)
\]
for Jacobian-vector-product evaluations, plus $O((T/q)B_{\mathrm{Lie}}d^3)$ for bracket assembly and curvature accumulation under the diagonal or pairwise projection used in the scalable implementation. An all-span batched Frobenius solve can raise the algebraic projection term toward $O((T/q)B_{\mathrm{Lie}}d^5)$, which is why the released large-graph diagnostics use cheaper projections unless the graph is small. Spectral factorization of the $d\times d$ curvature Gram adds $O(d^3)$ after bracket energies are formed. Thus the direct amortized path has the schematic cost
\[
  \widetilde O\!\left(T B P_\Theta + (T/q)d^2 P_\Theta + d^3\right),
\]
with no parent-set enumeration term. If one uses \textsc{SKFM}/\textsc{Bridge} as a hybrid screen and then runs local BIC, the additional discrete term is $O(d r^k)$, or $O(d\,2^k)$ when the mask itself bounds each node to $k$ candidate parents.

This is the computational phase transition emphasized by the experiments. Kernel teachers can be quadratic in $n$ because every local regression compares evaluation points against samples; amortized fields are linear in the number of evaluated points once trained. Direct \textsc{SKFM} removes the bounded exponential search term entirely, but only yields a reliable DAG when the solvable-Lie triangular extraction is well posed. In less rigid regimes, the safer interpretation is categorical and algorithmic: polynomial Lie geometry supplies a high-recall screen, and the residual bounded exponential is the price of choosing a discrete DAG basis from continuous intervention geometry.

\begin{table}[t]
\centering
\scriptsize
\setlength{\tabcolsep}{3pt}
\renewcommand{\arraystretch}{1.25}
\begin{tabular}{>{\raggedright\arraybackslash}p{0.13\linewidth}
                >{\raggedright\arraybackslash}p{0.19\linewidth}
                >{\raggedright\arraybackslash}p{0.19\linewidth}
                >{\raggedright\arraybackslash}p{0.18\linewidth}
                >{\raggedright\arraybackslash}p{0.23\linewidth}}
\toprule
\textbf{Method} & \textbf{Training/precomputation} & \textbf{Per-iteration or screen} & \textbf{Typical total cost} & \textbf{Dominant tradeoff} \\
\midrule
NOTEARS
& None beyond data moments
& Score gradients $O(nd^2)$; acyclicity $O(d^3)$ for $h(W)=\operatorname{tr}(\exp(W\circ W))-d$
& $O(I(nd^2+d^3))$
& Smooth exact acyclicity avoids discrete enumeration, but the trace-exponential constraint creates a stiff nonconvex optimization problem. \\
\addlinespace
DirectLiNGAM
& Covariances/cumulants $O(nd^2)$
& Independence tests over $O(d)$ candidates across $d$ removal rounds
& $O(d^3 n)$ for linear tests; higher with kernel tests
& Efficient under linear non-Gaussian assumptions, but iterative variable removal and independence testing encode stronger functional assumptions. \\
\addlinespace
ICA-LiNGAM
& Whitening $O(nd^2)$
& FastICA-style component updates
& About $O(kd^2n)$ per component/update regime
& Non-Gaussianity orients edges, but full decomposition is cubic or worse in $d$ depending on ICA implementation and convergence. \\
\addlinespace
\textsc{Bridge} kernel screen
& Local regressions $O(d^2n(na+a^3))$
& Kernel brackets $O(d^3n^2)+O(d^2ns^3)$; local BIC over retained parents
& $O(d^2n(na+a^3))+O(d r^k)$ after screening
& Nonparametric field estimation is flexible but can be quadratic in $n$ because evaluation points are compared against samples. \\
\addlinespace
\textsc{Bridge} hybrid
& Amortized field learning $O(TBP_\Theta)$
& Autodiff brackets $O(d^3n)$; local BIC $O(d r^k)$, or $O(d\,2^k)$ if the mask bounds each node to $k$ parents
& $\widetilde O(TBP_\Theta+d^3n+d r^k)$
& The geometric screen converts global DAG search into bounded local parent-set search; the residual exponential is in $k\ll d$. \\
\addlinespace
\textsc{SKFM} end-to-end
& CFM plus Lie penalty $O(TBP_\Theta)+O((T/q)d^2P_\Theta)$
& Spectral Gram factorization $O(d^3)$; direct extraction $O(d^2)$
& $\widetilde O(TBP_\Theta+(T/q)d^2P_\Theta+d^3)$
& Fully polynomial after amortization, but direct triangular extraction is reliable only when the solvable-Lie basis is identifiable. \\
\bottomrule
\end{tabular}
\caption{Comparative computational tradeoffs for representative causal discovery families, including NOTEARS \citep{notears}, DirectLiNGAM \citep{shimizu2011directlingam}, and ICA-LiNGAM \citep{shimizu2006lingam}. Here $d$ is the number of observed variables, $n$ the sample size, $I$ the number of optimizer iterations, $T$ the number of flow-matching epochs, $B$ the batch size, $q$ the Lie-penalty update interval, $P_\Theta$ the network size, $a$ the local adjustment dimension, $s$ the Frobenius projection span, $r$ the maximum number of retained candidate parents per target, and $k$ the maximum downstream indegree. The practical distinction is that NOTEARS spends computation on a global smooth acyclicity constraint, LiNGAM spends it on linear non-Gaussian ordering assumptions, kernel \textsc{Bridge} spends it on nonparametric teacher fields, and hybrid \textsc{Bridge}/\textsc{SKFM} amortize the geometry so that the remaining combinatorics are bounded by the candidate mask.}
\label{tab:complexity_tradeoffs}
\end{table}

\begin{figure}
    \centering
    \includegraphics[width=0.85\linewidth]{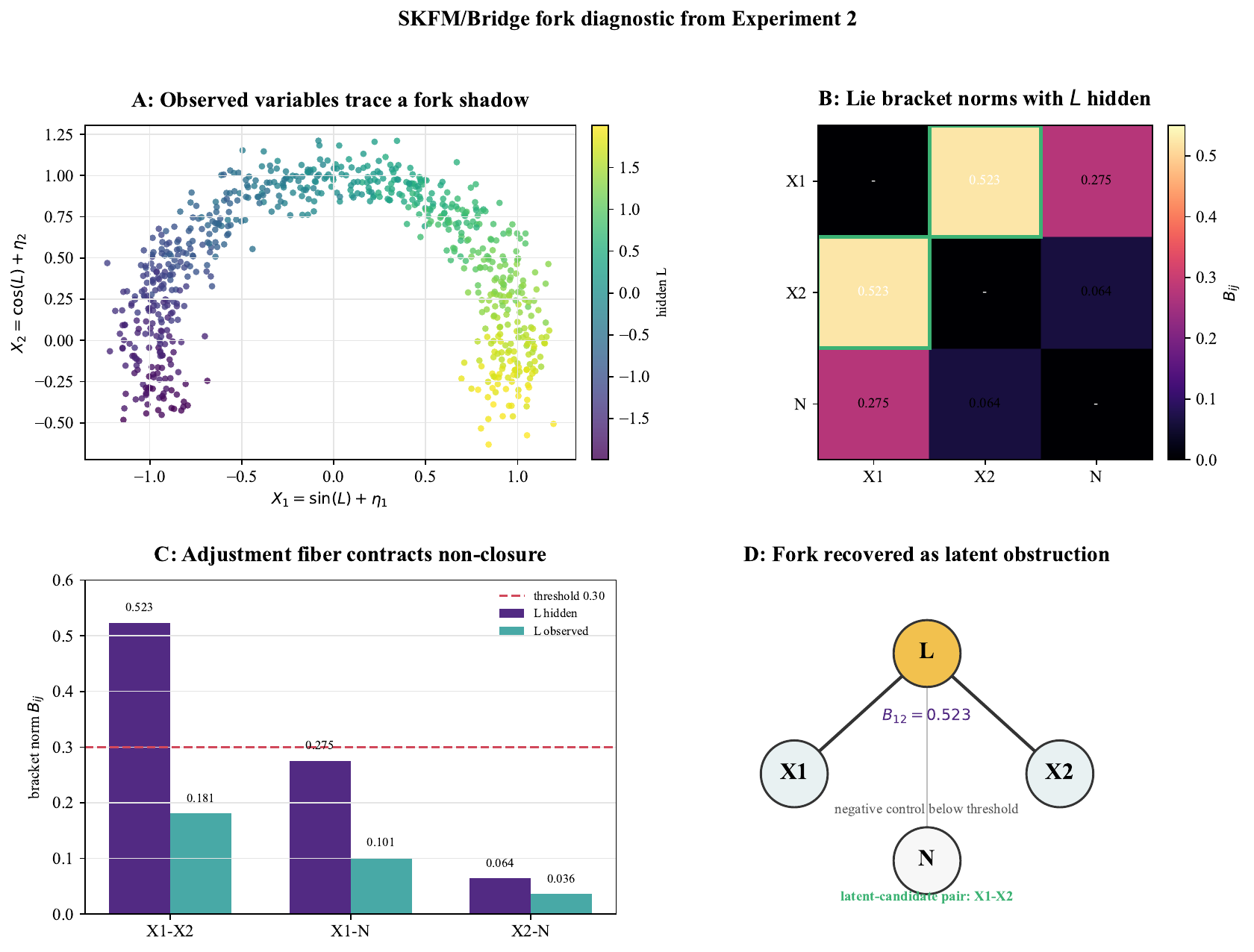}
    \caption{\textsc{SKFM}/\textsc{Bridge} fork diagnostic instantiated with the Experiment~2 nonlinear latent-fork data from Section~\ref{sec:experiments}. Panel A shows the observed variables $X_1=\sin(L)+\eta_1$ and $X_2=\cos(L)+\eta_2$, colored by the hidden driver $L$ for visualization. Panel B reports the hidden-latent Lie-bracket norms; the dominant non-closing pair is $(X_1,X_2)$ with $B_{12}=0.523$. Panel C compares hidden-$L$ and observed-$L$ regimes, showing that the same bracket contracts to $0.181$ when the missing adjustment fiber is supplied, while negative-control pairs remain below the latent-candidate threshold. Panel D summarizes the recovered fork interpretation $L\to X_1$ and $L\to X_2$ with no direct visible edge asserted between $X_1$ and $X_2$.}
    \label{fig:skfm}
\end{figure}

\subsection{Amortized Conditional Flow Matching}

\textsc{SKFM} replaces the Continuous Normalizing Flow (CNF) with \textbf{Conditional Flow Matching} (CFM) \citep{lipman2023flow}, which avoids ODE integration during training. The amortized vector field becomes:
\begin{equation}
    \frac{d}{dt}\bm{z}(t) = \bm{f}_\theta(\bm{z}(t), t, \bm{\phi}(i), \bm{m}_i, \bm{\epsilon}),
\end{equation}
where $\bm{\epsilon} \sim \mathcal{N}(0, \bm{I})$ is a noise source enabling exact simulation-free training. The conditional flow matching loss is:
\begin{equation}
    \mathcal{L}_{\text{CFM}} = \mathbb{E}_{t, i, \bm{x}} \left\| \bm{f}
    _\Theta(\bm{x}, t, \bm{\phi}(i), \cdot) - \bm{u}_t(\bm{x}|\bm{x}_1) \right\|^2_2,
\end{equation}
where $\bm{u}_t$ is the conditional vector field pushing base measure $\bm{p}_0$ to the interventional target $\bm{p}_{\text{do}(X_i)}$. This supplies an amortized transport-field interface. It does not automatically provide the same pointwise Radon--Nikodym ratio as a separately calibrated density model.

\subsection{Spectral Residual-Footprint Factorization}

We summarize learned nonclosure by a symmetric positive-semidefinite Gram operator built from vectorized projected residual features $b_i(x)$:
\begin{equation}
    \bm{G}_{ij}
    =
    \mathbb{E}_{\bm{x}\sim\bm{p}_{\text{obs}}}
    \left\langle b_i(\bm{x}),b_j(\bm{x})\right\rangle .
\end{equation}

Perform an eigendecomposition $\bm{G}=\bm{V}\bm{\Lambda}\bm{V}^\top$. Eigenvalue thresholding estimates the dimension of the \textbf{visible residual footprint}. It equals a number of latent loading directions only under the full-rank, positive-covariance, noncancellation, and eigengap assumptions below.

The projection
\begin{equation}
    \bm{z} = \bm{E}^\top \bm{x}, \quad \bm{E} = [\bm{e}_1 | \cdots | \bm{e}_m] \in \mathbb{R}^{d \times m},
\end{equation}
where $\bm e_k$ are the leading eigenvectors, is therefore a learned low-dimensional feature map. We do not call $\bm z$ an identified latent variable, assert a generative factorization through it, or use it as an automatically valid back-door adjustment set.

\subsection{Solvable Lie Algebra DAG Constraint}

Instead of post-hoc GES search, we enforce acyclicity via the \textbf{solvable Lie algebra condition}. A DAG corresponds to a nilpotent (strictly lower-triangular) adjacency structure in some basis. We impose this through a soft penalty on the structure constants of the Lie algebra.

Let $\bm{v}_i$ be the intervention fields. In the basis of the visible coordinates, compute the structure constants:
\begin{equation}
    [\bm{v}_i, \bm{v}_j] = \sum_{k=1}^d c^k_{ij} \bm{v}_k + \bm{r}_{ij}.
\end{equation}

For a DAG without latents, all $c^k_{ij} = 0$ when $i, j \leq k$ (respecting topological order). We define the \textbf{acyclicity loss}:
\begin{equation}
    \mathcal{L}_{\text{DAG}} = \sum_{i,j} \sum_{k \geq \max(i,j)} (c^k_{ij})^2 + \lambda_{\text{res}} \|\bm{r}\|_F^2.
\end{equation}
The first term encourages triangularity relative to the chosen order; the second penalizes residual nonclosure. Because nonclosure is nonspecific, the second term is a regularizer rather than a latent-confounding likelihood.

\subsection{Riemannian Optimization}

By Čencov's Theorem (Theorem~\ref{cencov}), the Fisher Information Metric $\bm{g}_F$ is the unique invariant metric. We optimize the parameters $\Theta$ using \textbf{Natural Gradient Descent}:
\begin{equation}
    \Theta_{t+1} = \Theta_t - \eta \, \bm{g}_F^{-1}(\Theta_t) \nabla_\theta \mathcal{L}_{\text{total}},
\end{equation}
where the total loss combines flow matching, calibration, and the DAG constraint:
\begin{equation}
    \mathcal{L}_{\text{total}} = \mathcal{L}_{\text{CFM}} + \alpha \mathcal{L}_{\text{cal}} + \beta \mathcal{L}_{\text{DAG}} + \gamma \mathcal{L}_{\text{Wasserstein}}.
\end{equation}
Here $\mathcal{L}_{\text{Wasserstein}}$ is the optimal transport regularizer for stability under weak overlap.

\subsection{Asymptotic Guarantees and Scope}

The propositions below isolate the idealized conditions behind the spectral and ordered-extraction components of \textsc{SKFM}. They are not finite-sample minimax or causal-identification results. Rank recovery assumes a low-rank factorization of the residual features and an eigenvalue gap; acyclicity assumes an order with respect to which extraction is triangular.

\begin{proposition}[Latent-Footprint Rank Consistency]
\label{latent_prop}
Let the population residual feature admit the factorization $b(X)=B\xi(X)$, where $B\in\mathbb R^{d\times m}$ has full column rank, $\mathbb E\|\xi(X)\|^2<\infty$, and $\Sigma_\xi=\mathbb E[\xi(X)\xi(X)^\top]$ is positive definite. If the empirical Gram matrix is the sample second moment of $b(X)$ and the threshold lies in the population eigengap after eigenvalue $m$, then spectral thresholding recovers $\operatorname{rank}(G)=m$ almost surely as $n\to\infty$. If the factors $\xi$ arise from noncancelling, linearly independent visible loadings of $m$ additive latent factors, this rank is their visible footprint dimension. The proposition does not infer that factorization from a nonzero bracket.
\end{proposition}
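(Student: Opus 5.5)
The plan has two parts: compute the population Gram matrix from the factorization, then transfer its spectrum to the empirical matrix by a strong law of large numbers and eigenvalue perturbation.

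\emph{Population structure.} Write $G=\mathbb E[b(X)b(X)^\top]$ for the $d\times d$ second-moment operator of the residual features; the eigendecomposition in the text is of this matrix. The factorization gives
\[
G=\mathbb E\bigl[B\xi(X)\xi(X)^\top B^\top\bigr]=B\Sigma_\xi B^\top .
\]
This expectation is finite because $\mathbb E\|\xi(X)\|^2<\infty$. Since $\Sigma_\xi\succ0$, it factors as $\Sigma_\xi=\Sigma_\xi^{1/2}\Sigma_\xi^{1/2}$ with $\Sigma_\xi^{1/2}$ invertible. Hence
\[
\operatorname{rank}(G)=\operatorname{rank}\bigl(B\Sigma_\xi^{1/2}\bigr)=\operatorname{rank}(B)=m,
\]
using full column rank of $B$. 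So $G$ is positive semidefinite with eigenvalues $\lambda_1\ge\cdots\ge\lambda_m>0=\lambda_{m+1}=\cdots=\lambda_d$. The population eigengap after eigenvalue $m$ is therefore $(0,\lambda_m)$, and the hypothesis places the threshold $\tau$ in it: $0<\tau<\lambda_m$.

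\emph{Sample convergence.} The empirical Gram matrix is
\[
\widehat G_n=\frac1n\sum_{t=1}^n b(X_t)b(X_t)^\top .
\]
Its entries are averages of i.i.d.\ terms $b_k(X)b_l(X)$. These are integrable because $|b_kb_l|\le\|b\|^2\le\|B\|^2\|\xi\|^2$. The strong law of large numbers, applied entrywise to these finitely many entries, gives $\widehat G_n\to G$ almost surely in operator norm. Weyl's inequality, $|\lambda_k(\widehat G_n)-\lambda_k(G)|\le\|\widehat G_n-G\|$, then gives almost-sure convergence of each ordered eigenvalue.

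\emph{Conclusion.} Fix $\delta=\min(\tau,\lambda_m-\tau)>0$. Almost surely, for all large $n$ the perturbation satisfies $\|\widehat G_n-G\|<\delta$. Then exactly $m$ empirical eigenvalues exceed $\tau$, so the thresholded rank estimate equals $m$ eventually, almost surely. I will also note that $\widehat G_n$ has rank at most $m$ exactly, because its range lies in $\operatorname{col}(B)$; thresholding is needed only to handle perturbations of the residual features. The final interpretive clause is then essentially definitional. If $\xi$ collects $m$ additive latent factors entering through noncancelling, linearly independent visible loadings (the columns of $B$), then $\operatorname{col}(G)=\operatorname{col}(B)$. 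That is the visible footprint, and its dimension is $m$. This is also where I will stress that the factorization is assumed, not inferred from a nonzero bracket.

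\emph{Main obstacle.} The only delicate point is interpreting ``empirical Gram matrix'' when $b$ is itself estimated rather than observed. The statement takes it to be the sample second moment of the population $b(X)$, and I will prove it as stated. In a remark I will add that if $\widehat b\to b$ in $L^2$, the extra error $\|\widehat G_n-G\|$ still vanishes in probability by Cauchy--Schwarz, giving rank recovery in probability instead.
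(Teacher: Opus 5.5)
Your proposal is correct and follows the paper's proof essentially step for step: $G=B\Sigma_\xi B^\top$ has rank $m$, the strong law gives $\widehat G_n\to G$ almost surely in operator norm for fixed $d$, and Weyl's inequality with a threshold in $(0,\lambda_m)$ yields exactly $m$ selected eigenvalues eventually. Your integrability check via $\|b\|^2\le\|B\|^2\|\xi\|^2$ and the explicit margin $\delta=\min(\tau,\lambda_m-\tau)$ fill in details the paper leaves implicit; in your optional remark on estimated $\widehat b$, however, population $L^2$ convergence alone does not control the plug-in Gram error when $\widehat b$ is fitted on the same sample, so you would need sample splitting or convergence in the empirical norm.
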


\begin{proof}
The population Gram matrix is
\[
G=\mathbb E[b(X)b(X)^\top]=B\Sigma_\xi B^\top .
\]
Full column rank of $B$ and positive definiteness of $\Sigma_\xi$ imply
\[
    \operatorname{rank}(G)=\operatorname{rank}(B\Sigma_\xi B^\top)=m.
\]
The strong law gives $\widehat G_n\to G$ almost surely entrywise and hence in operator norm for fixed $d$. Weyl's inequality then gives convergence of every empirical eigenvalue. A threshold lying strictly between $\lambda_m(G)$ and $\lambda_{m+1}(G)=0$ therefore selects exactly $m$ eigenvalues eventually almost surely.
\end{proof}

\begin{proposition}[Ordered Extraction Acyclicity]
\label{gradient_flow}
Fix an ordering of the variables. If Step~9 permits an edge only from an earlier to a later variable in that order, the extracted adjacency is acyclic. Every DAG can be represented by such an adjacency under one of its topological orders. A zero triangular penalty can encourage this condition but does not identify the correct order.
\end{proposition}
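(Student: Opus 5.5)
The plan is to prove Proposition~\ref{gradient_flow} in its three parts separately. Each is an elementary order-theoretic fact. First, acyclicity of the extracted adjacency. Second, representability of every DAG. Third, the remark on the triangular penalty.

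\textbf{Acyclicity.} Fix an ordering, i.e.\ a bijection $\pi:V\to\{1,\ldots,d\}$, and let $\bm A$ be the adjacency produced by Step~9. By hypothesis, $A_{ij}\neq 0$ implies $\pi(i)<\pi(j)$, whatever the influence threshold $\epsilon$ is.

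Suppose $\bm A$ contained a directed cycle $i_0\to i_1\to\cdots\to i_\ell=i_0$ with $\ell\ge 1$. Applying the constraint along each edge gives the chain
\[
\pi(i_0)<\pi(i_1)<\cdots<\pi(i_\ell)=\pi(i_0).
\]
This is a contradiction because $<$ is irreflexive and transitive. Self-loops are the case $\ell=1$ and are excluded for the same reason.

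Equivalently, conjugate $\bm A$ by the permutation matrix of $\pi$. The result is strictly upper triangular, hence nilpotent, so $\operatorname{tr}(\bm A^k)=0$ for all $k\ge1$. That is, no closed walks exist, which links the argument to the nilpotent-structure remark in the solvable-Lie subsection.

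\textbf{Representability.} Take any DAG $G$ on $V$. I will produce a topological order by the standard argument. A finite DAG has a source: otherwise, walking backwards along incoming edges must eventually revisit a vertex, giving a cycle. Remove the source, assign it the next label, and recurse on the remaining induced subgraph, which is still acyclic.

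Every edge $u\to w$ then satisfies $\pi(u)<\pi(w)$. This holds because $w$ cannot be removed while its parent $u$ remains, since $w$ is not a source at that point. Hence $G$'s adjacency is admissible under Step~9 for this $\pi$: it is exactly the Step~9 output when the influence scores exceed $\epsilon$ precisely on $G$'s edges, and it is strictly triangular in that order.

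\textbf{Penalty remark.} This part is a non-identification claim, and I will justify it by a simple example. Take the two-node fields for $X_1\to X_2$. The triangular penalty $\sum_{k\ge\max(i,j)}(c^k_{ij})^2$ depends on the order only through the index constraint. In settings where the bracket vanishes, such as commuting fields, the penalty is zero under both orders. Ordered extraction would then return $1\to2$ under one order and $2\to1$ (or the empty graph) under the other.

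So a zero penalty is compatible with incorrect orders and does not identify $\pi$. No real obstacle is expected anywhere in the proof. The only care needed is to state that the acyclicity conclusion holds for any threshold $\epsilon$ and any field estimates, since it depends solely on the order constraint.
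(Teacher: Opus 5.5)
Your proposal is correct and is essentially the paper's argument: a cycle would need a backward edge, and a topological order makes any DAG's adjacency strictly triangular; you also prove things the paper only asserts, namely the existence of a topological order via source removal and a commuting-field witness for the non-identification sentence. One loose spot is the illustration in the penalty remark, which is harmless to the conclusion. For the natural commuting instance, constant shift fields in a linear SCM, the Step~9 Jacobian scores also vanish, so extraction would be empty under both orders rather than returning $1\to 2$. Your conclusion needs only that all $c^k_{ij}$ and $r_{ij}$ vanish, which makes $\mathcal{L}_{\mathrm{DAG}}=0$ under every ordering.
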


\begin{proof}
After permuting variables into the fixed order, every permitted edge lies strictly above the diagonal. A directed cycle would require at least one edge returning from a later vertex to an earlier one, which is forbidden. Conversely, a topological order of any DAG puts every edge from earlier to later. The result is graph theoretic and conditional on the order; it supplies no order-recovery theorem.
\end{proof}

\subsection{Capabilities and Current Boundary of \textsc{SKFM}}

\begin{enumerate}
    \item \textbf{Amortized fields:} CFM shares statistical strength across known intervention targets and avoids a separate kernel regression at each query.
    \item \textbf{Residual summary:} Spectral factorization produces a low-dimensional footprint whose latent interpretation is governed by Proposition~\ref{latent_prop}.
    \item \textbf{Ordered extraction:} Given a correct order, the triangular penalty supplies the acyclicity certificate in Proposition~\ref{gradient_flow}.
    \item \textbf{Open bottleneck:} Learned order and order-free graph extraction remain unstable on non-chain and random-DAG settings; Wasserstein regularization mitigates but does not solve weak overlap.
\end{enumerate}

\subsection{Summary}

By integrating bracket geometry into flow matching, \textsc{SKFM} provides a differentiable experiment in direct extraction. Its strongest established use remains field amortization and residual summarization. When a reliable order is unavailable, the paper's recommended pipeline is still \textsc{Bridge}: retain a high-recall mask and delegate identification and graph selection to a regime-appropriate downstream method.

\section{Illustrative Numerical Benchmarks and Simulation Experiments}
\label{sec:experiments}
To validate the structural components of \textsc{KDC}, this section provides concrete numerical simulations and benchmark experiments modeled after the core routines in our software framework. Unless explicitly labeled \textsc{SKFM}, the reported screening results are \textsc{Bridge} runs: an interventional density or response-field engine followed by Lie/Frobenius masking and, where needed, a downstream scorer. We present a multi-regime linear-Gaussian chain to evaluate structural edge scoring, a non-linear confounded fork to highlight Lie bracket residual tracking under latent conditions, a \textsc{Bridge}-pruned TCES search that tests whether the framework can reduce the effective DAG search space in practice, an FCI skeleton baseline on the same latent-chain sweep, a non-chain latent-DAG motif stress test connecting the experiments to the spectral ideas in Section~\ref{sec:scaling}, an \textsc{SKFM} non-chain graph-extraction ablation, a Sachs protein-signaling benchmark comparing the \textsc{Bridge} candidate family against GES, a real S9 gene-expression pilot without a gold DAG, and a \textsc{Bridge}-DCDI integration study that connects the same geometric screen to differentiable causal discovery.

\subsection{Experiment 1: Multi-Regime Causal Chain Directionality}

We construct a synthetic three-node causal chain $X_0 \to X_1 \to X_2$ governed by structural equations across distinct regimes (observational and soft-interventional). The baseline data-generating process is defined as:
\begin{align}
    X_0 &\sim \mathcal{N}(0, 1.0) \nonumber \\
    X_1 &= 0.8 X_0 + \epsilon_1, \quad \epsilon_1 \sim \mathcal{N}(0, 0.5) \\
    X_2 &= 0.6 X_1 + \epsilon_2, \quad \epsilon_2 \sim \mathcal{N}(0, 0.5) \nonumber
\end{align}
A localized soft intervention is simulated on $X_1$ by shifting its local mechanism to $X_1 \sim \mathcal{N}(1.5, 0.2)$, inducing a change of measure across the downstream manifold.

For each soft intervention we fit a regularized Gaussian density model for the observational and interventional regimes and evaluate the Radon--Nikodym weights
$\rho_i(z)=p_{\mathrm{do}(X_i)}(z)/p_{\mathrm{obs}}(z)$ on observational samples. The reported score is the RN-weighted standardized response
\begin{equation}
    S_{\mathrm{RN}}(X_i \to X_j)
    =
    \frac{\left|\mathbb{E}_{\mathrm{obs}}\left[\rho_i(Z)X_j\right]
    - \mathbb{E}_{\mathrm{obs}}[X_j]\right|}
    {\sqrt{\mathrm{Var}_{\mathrm{obs}}(X_j)}} .
\end{equation}
This score is not used as a full graph estimator; it is a directionality diagnostic measuring whether an intervention on $X_i$ transports mass in the coordinate $X_j$.

\begin{table}[h]
\centering
\caption{\textsc{Bridge} RN-weighted response scores $S_{\mathrm{RN}}(X_i \to X_j)$ on a three-node chain.}
\label{tab:exp1_results}
\vspace{0.5em}
\begin{tabular}{lccc}
\toprule
\textbf{Source Node} & \textbf{Target $X_0$} & \textbf{Target $X_1$} & \textbf{Target $X_2$} \\
\midrule
$X_0$ (Intervened) & — & \textbf{1.113} & 0.759 \\
$X_1$ (Intervened) & 0.164 & — & \textbf{0.974} \\
$X_2$ (Intervened) & 0.133 & 0.241 & — \\
\bottomrule
\end{tabular}
\end{table}

As illustrated in Table~\ref{tab:exp1_results}, reweighting the sample space by the parent's causal density produces a substantially larger standardized response in its descendants than in its ancestors. The asymmetry between $S_{\mathrm{RN}}(X_0 \to X_1)=1.113$ and $S_{\mathrm{RN}}(X_1 \to X_0)=0.164$ recovers the first arrow, while $S_{\mathrm{RN}}(X_1 \to X_2)=0.974$ exceeds $S_{\mathrm{RN}}(X_2 \to X_1)=0.241$ for the second. Selecting the best incoming source above a threshold of $0.25$ gives the exact visible chain $X_0 \to X_1 \to X_2$ with $\mathrm{SHD}=0$ and $F_1=1.0$. The non-adjacent ancestral score $S_{\mathrm{RN}}(X_0 \to X_2)=0.759$ remains positive, reflecting indirect downstream propagation through the chain.

\subsection{Experiment 2: Sensitivity of the Closure Residual in a Latent-Fork Simulation}
To test whether the closure residual responds to an omitted common driver in a controlled example, we simulate a nonlinear fork with an unobserved latent factor $L$:
\begin{align}
    L &\sim \text{Uniform}(-2, 2) \nonumber \\
    X_1 &= \sin(L) + \eta_1, \quad \eta_1 \sim \mathcal{N}(0, 0.1) \\
    X_2 &= \cos(L) + \eta_2, \quad \eta_2 \sim \mathcal{N}(0, 0.1) \nonumber
\end{align}
Because $L$ is omitted from the visible dataset, passive conditioning suggests a strong nonlinear correlation between $X_1$ and $X_2$. We include an independent negative-control coordinate $N \sim \mathcal{N}(0,1)$ and estimate local visible intervention vector fields using smoothed local linear derivatives. We then compute the empirical bracket norm
\begin{equation}
    B_{ij} = \mathbb{E}_{z}\left[\left\|[v_i,v_j](z)\right\|_2\right],
\end{equation}
and compare the hidden-latent regime against a diagnostic regime in which $L$ is supplied as an observed adjustment fiber. If the bracket signal is genuinely induced by the latent common cause, exposing $L$ should reduce the non-commutativity between $X_1$ and $X_2$.

\begin{table}[h]
\centering
\caption{\textsc{Bridge} Lie-bracket diagnostics for a nonlinear latent fork.}
\label{tab:exp2_lie_bracket}
\vspace{0.5em}
\begin{tabular}{lccc}
\toprule
\textbf{Visible Pair} & \textbf{Hidden $L$: $B_{ij}$} & \textbf{Observed $L$: $B_{ij}$} & \textbf{Hidden residual ratio} \\
\midrule
$(X_1,X_2)$ & \textbf{0.523} & 0.181 & 0.240 \\
$(X_1,N)$ & 0.275 & 0.101 & 0.516 \\
$(X_2,N)$ & 0.064 & 0.036 & 0.721 \\
\bottomrule
\end{tabular}
\end{table}

As shown in Table~\ref{tab:exp2_lie_bracket}, the dominant non-commutativity occurs on the confounded visible pair $(X_1,X_2)$, with $B_{12}=0.523$. When the latent driver $L$ is revealed as an observed adjustment coordinate, this bracket norm drops to $0.181$. The negative-control pairs remain substantially weaker, and using a bracket threshold of $0.30$ flags only $(X_1,X_2)$ as a latent-confounding candidate. This experiment turns the qualitative Lie-bracket claim into a concrete diagnostic: hidden common causes induce measurable non-commutativity in the visible intervention fields, and the signal contracts when the missing adjustment fiber is restored.

\subsection{Experiment 3: \textsc{Bridge}-Pruned TCES Search Avoids DAG Enumeration}

The central computational bottleneck in score-based causal discovery is the size of the model class. Even for a modest number of observed variables, the number of labeled DAGs grows super-exponentially. Score-based methods such as GES reduce this burden by searching over Markov equivalence classes (MECs), using covered edge reversals and CPDAG/essential-graph representatives rather than treating every labeled DAG as statistically distinct \citep{chickering:jmlr}. This quotient is important but it is not an exponential collapse: Gillispie and Perlman's enumeration found that the ratio of DAGs to Markov equivalence classes appears to approach about $3.7$, and Schmid and Sly prove that the ratio of Markov equivalence classes to DAGs converges to a positive constant asymptotically \citep{gillispie2002size,schmid2024number}. Thus GES changes the constant in the search-space scale, whereas the \textsc{Bridge} mask changes the admissible arrow family before score evaluation begins.

To test whether the Lie/Frobenius construction has practical algorithmic force, we implemented a two-stage discovery pipeline:
\begin{enumerate}
    \item \textbf{Information-geometric screening:} estimate local visible intervention vector fields, compute pairwise Frobenius anisotropy, and retain only high-influence directed arrows in a candidate mask.
    \item \textbf{TCES/BIC selection:} score only the parent sets allowed by this mask using a Gaussian BIC objective, with optional Topos-Constraint-Enhanced Structure (TCES) sheaf and $j$-stability regularizers.
\end{enumerate}
Thus the geometric layer is not treated as the final estimator. Its role is to convert the original DAG search problem into a much smaller constrained score-based problem.

We generated $n=220$ samples from a seven-node nonlinear chain with one hidden confounding source:
\begin{align}
    L &\sim \mathcal{N}(0,1), \nonumber \\
    X_0 &= 0.7L + \epsilon_0, \nonumber \\
    X_1 &= 0.8\tanh(X_0) + \epsilon_1, \nonumber \\
    X_2 &= 0.9\sin(X_1) + \epsilon_2, \nonumber \\
    X_3 &= 0.7X_2 + \epsilon_3, \nonumber \\
    X_j &= 0.75\tanh(X_{j-1}) + \epsilon_j,\qquad j=4,5, \nonumber \\
    X_6 &= 0.75\tanh(X_5) + 0.7\sin(L) + \epsilon_6,
\end{align}
with independent Gaussian noise terms. The visible ground-truth graph is the chain
$X_0 \to X_1 \to X_2 \to X_3 \to X_4 \to X_5 \to X_6$, while $L$ remains unobserved and creates a latent confounding distortion between the first and last visible variables, $X_0$ and $X_6$.

\begin{figure}[t]
\centering
\includegraphics[width=0.88\textwidth]{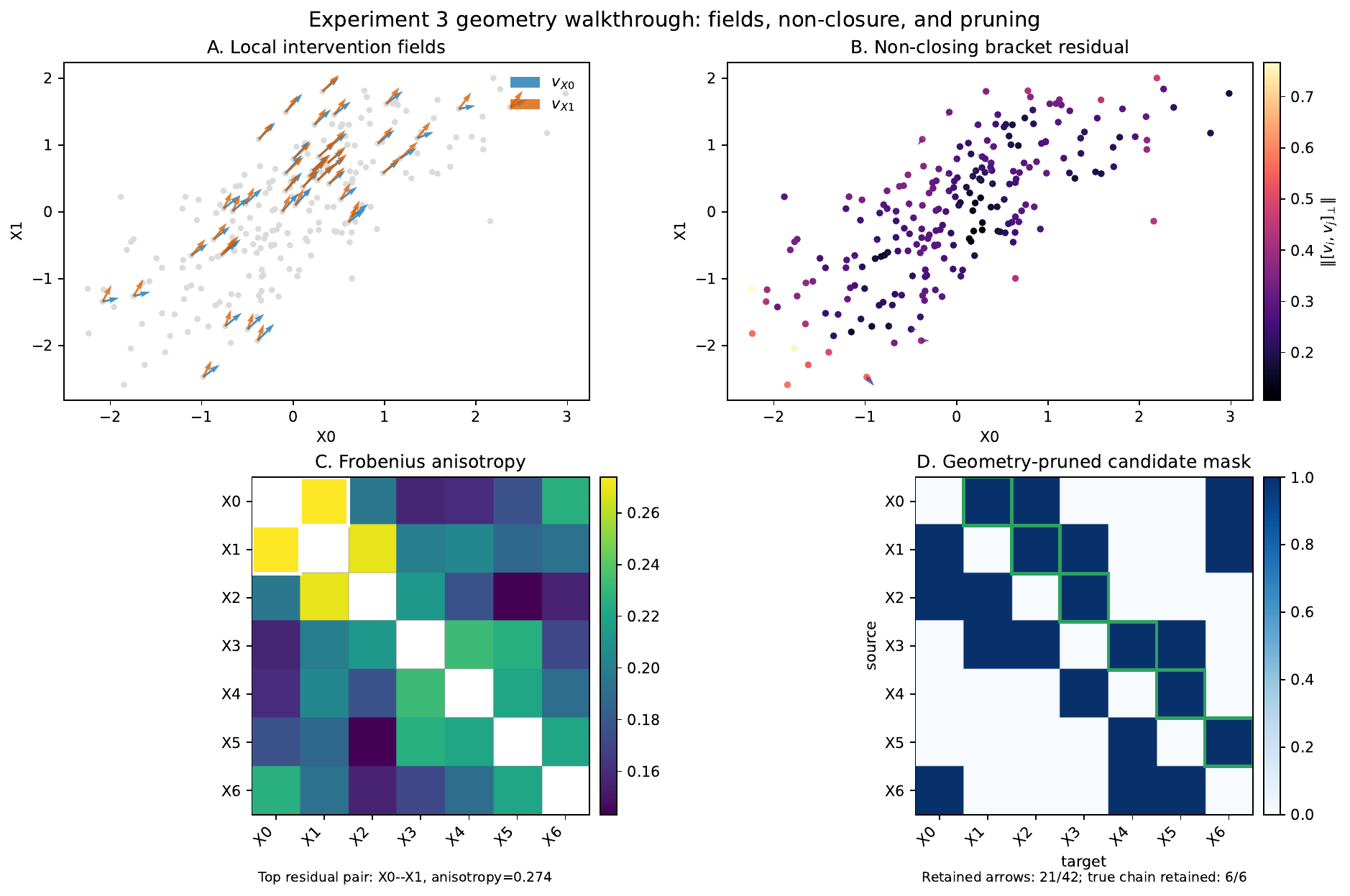}
\caption{\textsc{Bridge} geometry walkthrough for the seven-node latent-confounded chain in Experiment~3. The panels show learned local intervention fields, the non-closing Lie-bracket residual $[v_i,v_j]_\perp$, pairwise Frobenius anisotropy, and the final \textsc{Bridge}-pruned candidate mask. Green outlines mark the true visible chain arrows, all retained before TCES/BIC scoring.}
\label{fig:exp3_geometry_walkthrough}
\end{figure}

The geometric screen retained $21$ directed candidate arrows out of the $42$ possible directed non-self pairs while preserving all six true visible arrows. This high-recall candidate family was then passed to a TCES/BIC scorer with maximum indegree one. The resulting search evaluates only the mask-constrained parent-set combinations rather than the full labeled DAG family or its constant-factor Markov-equivalence quotient.

\begin{table}[h]
\centering
\caption{\textsc{Bridge}-pruned TCES search on a seven-node latent-confounded chain.}
\label{tab:geometry_tces_pruning}
\vspace{0.5em}
\begin{tabular}{lr}
\toprule
\textbf{Quantity} & \textbf{Value} \\
\midrule
Possible directed non-self arrows & $42$ \\
\textsc{Bridge}-retained candidate arrows & $21$ \\
True directed arrows retained by screen & $6/6$ \\
All labeled DAGs on seven nodes & $1{,}138{,}779{,}265$ \\
Full-space MEC scale (DAGs$/3.7$ heuristic) & $\approx 3.08{\times}10^8$ \\
Acyclic DAGs inside geometry mask & $107{,}121$ \\
Exact acyclic-family compression & $10{,}630.8\times$ \\
TCES parent-set combinations evaluated & $16{,}384$ \\
TCES acyclic graphs evaluated & $6{,}650$ \\
\bottomrule
\end{tabular}
\end{table}

The final selected graph preserved most of the visible chain but no longer exactly matched it:
\[
    \mathrm{SHD}=2,\qquad
    F_1 \approx 0.833,\qquad
    \mathrm{TP}=5,\quad \mathrm{FP}=1,\quad \mathrm{FN}=1.
\]
This experiment provides concrete evidence that the information-geometric construction can do more than diagnose latent curvature: it can make causal discovery more tractable by converting an intractable global DAG search into a high-recall, \textsc{Bridge}-pruned scoring problem. At the same time, the endpoint-confounded chain is a harder case than the earlier interior-confounded chain: Lie/Frobenius geometry supplies the search-space reduction and retains the truth, but TCES/BIC still has to resolve a long-range latent distortion inside the reduced family.

To check that this behavior was not an isolated seven-node artifact, we repeated the same endpoint-confounded latent-chain construction while increasing the number of observed chain nodes and holding the screen hyperparameters fixed. In the size sweep, the hidden source acts on $X_0$ and $X_{d-1}$, leaving the visible ground-truth chain unchanged. Table~\ref{tab:geometry_tces_size_sweep} reports the resulting size sweep. In every case, the Lie/Frobenius screen retained all true visible chain arrows, while the exact max-indegree-one TCES/BIC search made two structural errors after scoring inside the reduced family. For $d=9$ and $d=10$, exact enumeration of every acyclic subgraph in the geometric mask was intentionally not performed because the mask exceeded the counting cap; the table therefore reports the conservative compression lower bound against all mask subgraphs together with the exact number of TCES parent-set combinations evaluated.

\begin{table}[h]
\centering
\caption{Size sweep for \textsc{Bridge}-pruned TCES search on latent-confounded chains.}
\label{tab:geometry_tces_size_sweep}
\vspace{0.5em}
\resizebox{\textwidth}{!}{%
\begin{tabular}{rrrrrrrr}
\toprule
$d$ & Directed arrows & \textsc{Bridge} arrows & True retained & Compression lower bound & TCES evals & SHD & $F_1$ \\
\midrule
$7$  & $42$ & $21$ & $6/6$ & $5.43{\times}10^2$ & $16{,}384$ & $2$ & $0.833$ \\
$8$  & $56$ & $24$ & $7/7$ & $4.67{\times}10^4$ & $65{,}536$ & $2$ & $0.857$ \\
$9$  & $72$ & $27$ & $8/8$ & $9.04{\times}10^6$ & $262{,}144$ & $2$ & $0.875$ \\
$10$ & $90$ & $30$ & $9/9$ & $3.89{\times}10^9$ & $1{,}048{,}576$ & $2$ & $0.889$ \\
\bottomrule
\end{tabular}
}
\end{table}

Because the data-generating process contains an unobserved common cause, unconstrained GES is not the most appropriate standalone baseline: it searches DAGs or equivalence classes under causal sufficiency, whereas latent-aware constraint and hybrid methods such as FCI, RFCI, and GFCI return partial ancestral graph structure with uncertain, directed, and bidirected endpoint marks \citep{spirtes2000causation,colombo2012learning,ogarrio2016hybrid}. We therefore add a lightweight FCI baseline on the same latent-chain sweep, implemented with \texttt{causal-learn} \citep{zheng2024causallearn}. Since FCI returns a PAG rather than a single directed visible DAG, Table~\ref{tab:latent_chain_fci_baseline} reports skeleton recovery. This is intentionally a different target than the \textsc{Bridge}-TCES result above: FCI asks whether the latent-aware PAG skeleton contains the right adjacencies, while \textsc{Bridge} asks whether local Lie/Frobenius geometry can preserve the true directed support and shrink the family passed to a downstream scorer.

\begin{table}[h]
\centering
\caption{Latent-aware FCI baseline on the same latent-confounded chain sweep. FCI returns a PAG, so metrics are skeleton-level and should not be read as directed-DAG recovery.}
\label{tab:latent_chain_fci_baseline}
\vspace{0.5em}
\begin{tabular}{rrrrrrrr}
\toprule
$d$ & True skel. & FCI skel. & TP & FP & FN & Skel. $F_1$ & Skel. SHD \\
\midrule
$7$  & $6$ & $7$  & $6$ & $1$ & $0$ & $0.923$ & $1$ \\
$8$  & $7$ & $8$  & $7$ & $1$ & $0$ & $0.933$ & $1$ \\
$9$  & $8$ & $9$  & $8$ & $1$ & $0$ & $0.941$ & $1$ \\
$10$ & $9$ & $10$ & $9$ & $1$ & $0$ & $0.947$ & $1$ \\
\bottomrule
\end{tabular}
\end{table}

The comparison is useful precisely because FCI performs well here: it retains every true visible skeleton edge and adds only one extra adjacency at each size. Thus the claim of Table~\ref{tab:geometry_tces_size_sweep} should not be that GES/TCES are superior latent-confounded discovery algorithms in isolation. Rather, TCES/GES is the final scoring layer inside a \textsc{Bridge}-defined candidate family. The latent-aware baseline supplies the appropriate PAG/skeleton comparison, while \textsc{Bridge} supplies a different computational object: a directed admissible-arrow mask that reduces the downstream DAG family by factors from $5.43{\times}10^2$ to $3.89{\times}10^9$ on this sweep while retaining all true visible arrows.

A sharper directed latent-confounded comparator is the higher-order-cumulant method of \citet{cai2023cumulants}. That method assumes a canonical latent-variable LiNGAM model, uses higher-order cumulants to obtain closed-form mixing-coefficient estimates in One-Latent-Component structures, and then recursively identifies latent components and causal orders. Its assumptions include at least one pure observed set for each latent confounder, at least three observed children per latent confounder, and faithfulness. Under those assumptions, the published Case~7 benchmark reports directed-edge $F_1$ scores of $0.55$, $0.61$, and $0.73$ at sample sizes $500$, $1000$, and $2000$, respectively. We do not insert those numbers into Table~\ref{tab:geometry_tces_size_sweep} because the present \textsc{Bridge} chain is a nonlinear Gaussian-noise benchmark and the hidden source directly affects only two visible sites, violating the cumulant method's three-child latent condition.

To make this comparison executable with standard software, we also ran baselines from the Python \texttt{lingam} package \citep{ikeuchi2023lingam}. Table~\ref{tab:lingam_linear_baselines} reports RCD \citep{maeda2020rcd} and CAM-UV \citep{maeda2021camuv} on a linear non-Gaussian chain with the same visible support and a hidden common cause between the endpoint variables $X_0$ and $X_{d-1}$. This benchmark is intentionally matched to the LiNGAM family rather than to the nonlinear Gaussian-noise chain in Table~\ref{tab:geometry_tces_size_sweep}. RCD recovers most directed chain edges for the larger instances and sometimes marks pairs as latent/unknown through \texttt{NaN} entries; CAM-UV is more conservative, with fewer false positives but lower recall. These baselines are not search-compression methods, so their role is complementary: they estimate a graph or partial graph directly, while \textsc{Bridge} reduces the admissible directed family before TCES/GES/DCDI-style scoring.

\begin{table}[h]
\centering
\caption{Latent-confounder baselines from the local \texttt{lingam} package on a linear non-Gaussian latent chain ($n=500$). Metrics are directed visible-edge recovery; \texttt{NaN} pairs are pairs marked unknown/confounded by the estimator.}
\label{tab:lingam_linear_baselines}
\vspace{0.5em}
\begin{tabular}{rlrrrrr}
\toprule
$d$ & Method & Directed edges & \texttt{NaN} pairs & Prec. & Recall & $F_1$ \\
\midrule
$7$  & RCD    & $6$  & $2$ & $0.833$ & $0.833$ & $0.833$ \\
$7$  & CAM-UV & $3$  & $0$ & $1.000$ & $0.500$ & $0.667$ \\
$8$  & RCD    & $7$  & $4$ & $0.714$ & $0.714$ & $0.714$ \\
$8$  & CAM-UV & $4$  & $0$ & $0.750$ & $0.429$ & $0.545$ \\
$9$  & RCD    & $7$  & $2$ & $1.000$ & $0.875$ & $0.933$ \\
$9$  & CAM-UV & $5$  & $0$ & $0.800$ & $0.500$ & $0.615$ \\
$10$ & RCD    & $9$  & $2$ & $0.889$ & $0.889$ & $0.889$ \\
$10$ & CAM-UV & $5$  & $0$ & $0.800$ & $0.444$ & $0.571$ \\
\bottomrule
\end{tabular}
\end{table}

We nevertheless keep the installed \texttt{CEId-from-Moments} codebase for \citet{tramontano2025ceid} as a second pilot. That repository is primarily an effect-identification package for proxy and instrumental-variable motifs, but it also ships the ReLVLiNGAM high-order-cumulant estimator. We therefore generated a linear non-Gaussian chain with one hidden common cause between the endpoint variables $X_0$ and $X_{d-1}$, ran the same \textsc{Bridge} geometric screen, and evaluated ReLVLiNGAM as a visible directed-edge estimator. Table~\ref{tab:relvlingam_bridge_pilot} should be read as a compatibility pilot rather than a definitive benchmark: the cumulant estimator is being used outside the exact proxy/IV target of \citet{tramontano2025ceid}, while \textsc{Bridge} is being tested on a linear model rather than the nonlinear intervention-field setting of Table~\ref{tab:geometry_tces_size_sweep}. The comparison is somewhat platform-sensitive because the upstream ReLVLiNGAM implementation depends on a compiled Cython moment-estimation extension; the numbers below were obtained with the local macOS build. The useful outcome is that the comparison is now executable where the CEId extension is available and exposes the right next step: a dedicated canonical-lvLiNGAM benchmark, preferably using the cumulant discovery assumptions of \citet{cai2023cumulants}, where \textsc{Bridge} reports mask recall and candidate-family compression and the cumulant method reports final directed-graph recovery.

\begin{table}[h]
\centering
\caption{Pilot linear non-Gaussian latent-chain comparison using the installed ReLVLiNGAM implementation from \texttt{CEId-from-Moments}. \textsc{Bridge} columns report candidate-mask size and search compression; ReLVLiNGAM columns report visible directed-edge recovery after thresholding estimated coefficients at $10^{-2}$. The run depends on the platform-specific compiled CEId moment-estimation extension.}
\label{tab:relvlingam_bridge_pilot}
\vspace{0.5em}
\begin{tabular}{rrrrrr}
\toprule
$d$ & \textsc{Bridge} cand. & True kept & Comp. & ReLV latent cols. & ReLV $F_1$ \\
\midrule
$7$  & $21$ & $3/6$ & $5.43{\times}10^2$ & $1$ & $0.222$ \\
$8$  & $24$ & $6/7$ & $4.67{\times}10^4$ & $1$ & $0.294$ \\
$9$  & $27$ & $5/8$ & $9.04{\times}10^6$ & $2$ & $0.186$ \\
$10$ & $30$ & $5/9$ & $3.89{\times}10^9$ & $1$ & $0.192$ \\
\bottomrule
\end{tabular}
\end{table}

The endpoint-confounded linear pilot is therefore more difficult than the interior-pair pilot: the same \textsc{Bridge} hyperparameters still compress aggressively, but mask recall varies from $0.50$ to $0.86$, and the cumulant estimator tends to return dense visible graphs with low precision. We include the row because it is executable and diagnostic, not because it is a settled benchmark.

As a first end-to-end comparison with the \textsc{SKFM} method described in Section~\ref{sec:skfm}, we also ran the same latent-chain scale-up using amortized conditional-flow fields, the spectral residual-footprint projector, the soft Lie-algebra DAG penalty, and a direct order-local graph extractor rather than the TCES/BIC scorer. The extractor used the same known chain order and max-indegree-one structural prior used by the TCES scale-up, selecting the nearest order-local parent when its learned influence was within $60\%$ of the best admissible parent. This is therefore not a replacement for the general \textsc{Bridge}-pruned scorer, but a controlled test of whether \textsc{SKFM} can close the loop on the same ordered-chain benchmark without enumerating candidate DAGs.

\begin{table}[h]
\centering
\caption{End-to-end \textsc{SKFM} on the Table~\ref{tab:geometry_tces_size_sweep} endpoint-confounded latent-chain scale-up. Runs used $n=220$, $200$ field-matching epochs, the spectral residual-rank threshold described in Section~\ref{sec:skfm}, and an order-local max-indegree-one extractor.}
\label{tab:skfm_table5_scaleup}
\vspace{0.5em}
\begin{tabular}{rrrrrrr}
\toprule
$d$ & Learned arrows & Field corr. & Rel. infl. MSE & Residual rank & SHD & $F_1$ \\
\midrule
$7$  & $6$ & $0.955$ & $0.018$ & $2$ & $0$ & $1.000$ \\
$8$  & $7$ & $0.935$ & $0.052$ & $1$ & $0$ & $1.000$ \\
$9$  & $8$ & $0.943$ & $0.049$ & $1$ & $0$ & $1.000$ \\
$10$ & $9$ & $0.930$ & $0.089$ & $1$ & $0$ & $1.000$ \\
\bottomrule
\end{tabular}
\end{table}

Table~\ref{tab:skfm_table5_scaleup} shows that, once the ordered-chain prior is supplied, \textsc{SKFM} recovers the exact visible chain for all four sizes while bypassing the downstream combinatorial scorer. The learned field quality remains high across the sweep, with influence-score correlations around $0.93$--$0.96$. The selected residual footprint is usually one-dimensional, with the seven-node endpoint-confounded run selecting rank two. These ranks summarize fitted nonclosure and are not estimates of the literal number of hidden variables. This result motivates the scoped \textsc{SKFM} direction while also clarifying its boundary: the order-local extractor is tailored to this chain benchmark.

\subsection{Experiment 4: Non-Chain Latent DAG Motifs and Spectral Curvature Rescue}
The chain experiments above establish that Lie/Frobenius screening can preserve directed support in a path-like latent-confounded system. A natural concern is that this success may be specific to chains, where the strongest downstream influence often coincides with the next structural arrow. To test whether the geometric signal extends beyond chains, we generated three six-node nonlinear DAG motifs with hidden common causes:
\begin{enumerate}
    \item \textbf{Diamond:} $X_0$ forks to $X_1$ and $X_2$, which rejoin at $X_3$.
    \item \textbf{Collider/fork:} two roots collide at $X_2$, then split and rejoin at $X_5$.
    \item \textbf{Multi-branch:} an irregular DAG with both skip-like branching and a terminal join.
\end{enumerate}
Each motif includes one or two injected latent-confounded visible pairs. We evaluate two versions of the candidate screen: the original top-$k$ local influence mask, and a spectral curvature-footprint rescue inspired by the Lie-bracket spectrum of Section~\ref{sec:scaling}. The spectral version forms a curvature matrix from Frobenius anisotropy, constructs the Gram proxy $G=A A^\top$, extracts the leading curvature eigenvectors, and treats their largest coordinates as latent footprints. Candidate arrows are then rescued only when supported by influence rankings as common-source or shared-child arrows around those low-rank footprints.

\begin{table}[h]
\centering
\caption{\textsc{Bridge} non-chain latent-DAG motif stress test. Values are means over three random seeds with $n=160$. Candidate recall measures the fraction of true visible arrows retained by the \textsc{Bridge} mask before downstream scoring.}
\label{tab:latent_dag_motifs}
\vspace{0.5em}
\resizebox{\textwidth}{!}{%
\begin{tabular}{lrrrrr}
\toprule
\textbf{Motif} & \textbf{Plain recall} & \textbf{Plain arrows} & \textbf{Spectral recall} & \textbf{Spectral arrows} & \textbf{Residual rank} \\
\midrule
Diamond & $0.722$ & $21.7$ & $\mathbf{1.000}$ & $28.7$ & $6.33$ \\
Collider/fork & $0.944$ & $21.7$ & $\mathbf{1.000}$ & $29.0$ & $2.67$ \\
Multi-branch & $0.714$ & $22.3$ & $\mathbf{0.952}$ & $28.3$ & $7.00$ \\
\bottomrule
\end{tabular}
}
\end{table}

Table~\ref{tab:latent_dag_motifs} shows that the plain influence mask is not yet robust for arbitrary non-chain DAGs. Diamond joins and irregular multi-branch structures expose a real failure mode: true edges near latent footprints can be ranked below stronger but less structural local influences. The spectral curvature rescue repairs much of this failure. It restores perfect candidate recall for the diamond and collider/fork motifs and raises multi-branch recall from $0.714$ to $0.952$. This supports the spectral-scaling proposal in Section~\ref{sec:scaling}: latent curvature should be treated as a low-rank footprint rather than merely as a pairwise anomaly.

The cost is that the rescued candidate family is wider: in these six-node motifs, the spectral mask keeps roughly $28$--$29$ of the $30$ possible directed non-self arrows. Thus this experiment is not a headline compression result. Its value is diagnostic: it identifies where the first-order influence screen fails, and it demonstrates that the Lie-bracket spectrum points in the right direction for recovering hidden-root, join, and collider neighborhoods. Downstream greedy BIC scoring inside these masks still fails to recover the exact nonlinear latent DAGs, even when all true arrows are retained. This separates the next research tasks cleanly: spectral curvature improves the candidate family, while nonlinear local scoring or TCES-style stability terms are still needed for final graph selection.

We also ran the \textsc{SKFM} system (Section~\ref{sec:skfm}) on the same three motifs as an end-to-end diagnostic. Table~\ref{tab:skfm_motif_extraction_ablation} separates the field-learning layer from the final graph-extraction heuristic. The learned amortized fields remain accurate, with mean influence-score correlations between $0.926$ and $0.960$ across the motifs. However, candidate recall in the learned Stage-1 screen is uneven, especially for the diamond and multi-branch motifs. The default top-$k$ extractor therefore recovers only partial directed structure. Supplying the true topological order and using a wider local multi-parent extractor improves recall on every motif, but it also admits extra arrows, so $F_1$ remains around $0.71$--$0.72$ rather than reaching exact DAG recovery.

\begin{table}[h]
\centering
\caption{\textsc{SKFM} non-chain DAG extraction ablation. Values are means over seeds $11$--$13$ with $n=160$. ``Default'' is the direct top-$k$ extractor. ``Local/order'' supplies the motif's coordinate topological order and uses a local two-parent extractor; it is an oracle-order diagnostic, not a downstream score-based search.}
\label{tab:skfm_motif_extraction_ablation}
\vspace{0.5em}
\resizebox{\textwidth}{!}{%
\begin{tabular}{lrrrrrr}
\toprule
\textbf{Motif} & \textbf{Field corr.} & \textbf{Stage-1 recall} & \textbf{Default $F_1$} & \textbf{Default recall} & \textbf{Local/order $F_1$} & \textbf{Local/order recall} \\
\midrule
Diamond & $0.954$ & $0.611$ & $0.614$ & $0.667$ & $\mathbf{0.714}$ & $\mathbf{0.833}$ \\
Collider/fork & $0.960$ & $0.833$ & $0.634$ & $0.722$ & $\mathbf{0.724}$ & $\mathbf{0.889}$ \\
Multi-branch & $0.926$ & $0.571$ & $0.700$ & $0.667$ & $\mathbf{0.707}$ & $\mathbf{0.762}$ \\
\bottomrule
\end{tabular}
}
\end{table}

The diagnostic above localized the remaining bottleneck to Step~9 of Algorithm~\ref{alg:skfm}: the raw Jacobian score
\[
    I_{ij}=\mathbb{E}_z\left[\left\|\frac{\partial v_j(z)}{\partial x_i}\right\|_F\right]
\]
measures total geometric influence, so it can be large for direct arrows, indirect ancestral paths, and confounded pairs. We therefore ran focused Step~9 calibrations on the diamond motif, which contains a common-cause/sibling obstruction, and the collider/fork motif, which contains the opposite common-effect geometry. Both runs used free-vector SKFM fields with a continuity regularizer
\[
    \mathcal{L}_{\mathrm{div}}
    =
    \sum_i
    \mathbb{E}_{z\sim p_{\mathrm{obs}}}
    \left[
    \left(
    \nabla\!\cdot v_i(z)-(\rho_i(z)-1)
    \right)^2
    \right],
\]
with weight $5$. This raised held-out continuity correlations above $0.7$ in the motif diagnostics, making the learned $v_i$ fields behave more like calibrated intervention transports rather than generic vector-valued regressors.

The calibrated Step~9 extractor uses the leading spectral curvature subspace $E$ with rank $m=2$ and defines a node-contamination score
\[
    c_i
    =
    \frac{\mathbb{E}_z\left[\|E^\top v_i(z)\|_2\right]}
         {\mathbb{E}_z\left[\|v_i(z)\|_2\right]+\epsilon}.
\]
Candidate arrows are rescored by a contamination-gradient boost and a latent-bracket penalty,
\[
    S_{ij}
    =
    I_{ij}
    \exp\!\left(\alpha\,\max(c_j-c_i,0)\right)
    \exp\!\left(-\lambda_{\mathrm{lat}} R^{\mathrm{lat}}_{ij}\right),
    \qquad
    R^{\mathrm{lat}}_{ij}
    =
    \frac{\mathbb{E}_z\left[\|E^\top [v_i,v_j](z)\|_2\right]}
         {\mathbb{E}_z\left[\|[v_i,v_j](z)\|_2\right]+\epsilon}.
\]
The boost, with $\alpha\simeq 4$, prefers parents that are cleaner than their effects. The penalty suppresses edges whose bracket energy lies in the spectral latent footprint. Finally, two post-processing steps convert the continuous score into an adjacency matrix. A root-pair mask removes an ordered edge $i\to j$ when $|c_i-c_j|<\epsilon_{\mathrm{root}}$ and $R^{\mathrm{lat}}_{ij}>\tau_{\mathrm{lat}}$, preventing the extractor from forcing a spurious arrow between two similarly contaminated roots or siblings. A score-aware transitive reduction then removes $i\to j$ when a two-hop path $i\to k\to j$ already exists with comparable bottleneck score. This second step is deliberately score-aware because biological or engineered systems can contain true shortcut edges; in such settings the pruning rule should be weakened or disabled unless the direct score is not stronger than the mediated path.

\begin{table}[h]
\centering
\caption{Focused \textsc{SKFM} Step~9 calibration on two six-node non-chain motifs. Both runs use free-vector fields with $\mathcal{L}_{\mathrm{div}}$ weight $5$, a rank-two spectral contamination subspace, $\alpha=4$, an ordered local parent extractor, root-pair masking, and score-aware two-hop transitive pruning. The latent penalty is weaker on collider/fork because its node-contamination scores are more tightly clustered.}
\label{tab:skfm_motif_step9_calibration}
\vspace{0.5em}
\resizebox{\textwidth}{!}{%
\begin{tabular}{lrrrrrl}
\toprule
\textbf{Motif} & \textbf{$\Delta_c$} & \textbf{$\lambda_{\mathrm{lat}}$} & \textbf{Max parents} & \textbf{Precision} & \textbf{Recall} & \textbf{Recovered arrows} \\
\midrule
Diamond & $\approx 0.25$ & $1.0$ & $2$ & $1.000$ & $1.000$ & $X_0{\to}X_1,\ X_0{\to}X_2,\ X_1{\to}X_3,\ X_2{\to}X_3,\ X_3{\to}X_4,\ X_3{\to}X_5$ \\
Collider/fork & $\approx 0.12$ & $0.5$ & $3$ & $1.000$ & $1.000$ & $X_0{\to}X_2,\ X_1{\to}X_2,\ X_2{\to}X_3,\ X_2{\to}X_4,\ X_3{\to}X_5,\ X_4{\to}X_5$ \\
\bottomrule
\end{tabular}
}
\end{table}

Table~\ref{tab:skfm_motif_step9_calibration} shows exact directed recovery on both the common-cause diamond and the common-effect collider/fork. The collider/fork run also reveals why a raw eigenvalue-gap rule is not sufficient for choosing $\lambda_{\mathrm{lat}}$: the eigengap suggested a stronger penalty, around $1.6$--$2.0$, which over-penalized real collider branches. The better signal is the contamination spread
\[
    \Delta_c=\max_i c_i-\min_i c_i.
\]
When $\Delta_c$ is large, as in the diamond where the root is geometrically cleaner than its descendants, a stronger latent penalty is safe. When $\Delta_c$ is small, as in collider/fork where roots and downstream branches are more similarly contaminated, the latent penalty must be weaker to avoid suppressing true arrows. The two motif runs are therefore consistent with an adaptive rule in which $\lambda_{\mathrm{lat}}$ increases with $\Delta_c$ and is clipped to a conservative range, rather than being set directly by the raw spectral eigengap.

These results sharpen the interpretation of \textsc{SKFM}: field calibration and graph extraction are separable. The continuity loss makes the learned fields trustworthy enough to use geometrically; contamination-gradient scoring separates causes from their contaminated effects; root-pair masking handles sibling/root confounding; and score-aware transitive pruning removes total-effect shortcuts. At the same time, subsequent ten-node random-DAG tests showed that forcing this direct Step~9 extractor to solve general graph recovery is unstable. Even with per-field heads and held-out continuity correlations above $0.8$, direct extraction by Jacobian influence, Radon--Nikodym variance reduction, sparse structure constants, root masks, and contamination-delta constraints varied from poor to moderate performance across seeds. This is not a failure of the geometric screen; it marks the boundary where local Lie-algebraic evidence should be handed to a global scorer.

\subsection{Experiment 5: Ten-Node Random DAGs with \textsc{Bridge}-Hybrid BIC}
We therefore tested the paper's main practical architecture on larger nonlinear DAGs: calibrated intervention fields produce a geometry-pruned candidate family, and a downstream decomposable score chooses the final parent sets. Each random graph has $d=10$ observed variables, sparse acyclic nonlinear structural equations, and three injected latent-confounded visible pairs. The calibrated field learner uses a shared encoder with per-node vector-field heads and the divergence-continuity loss above. This removed the field-interference failure observed with a single amortized intervention head: across the random-DAG runs, held-out continuity correlations were typically above $0.8$ and often above $0.9$.

The \textsc{Bridge}-hybrid extractor computes an autodifferentiated influence score
\[
    I_{ij}
    =
    \mathbb{E}_z
    \left[
    \left\|
    \frac{\partial v_j(z)}{\partial x_i}
    \right\|_2
    \right],
\]
optionally downweights bracket energy in a rank-two spectral residual subspace, thresholds the resulting score to retain roughly $40$--$50\%$ of the order-compatible arrows, and then runs a local Gaussian BIC scorer over parent sets inside the retained mask. In these diagnostics the true generator order is used for the ordered BIC step, so the experiment isolates candidate-family quality and local parent selection rather than CPDAG orientation.

\begin{table}[h]
\centering
\caption{\textsc{Bridge}-hybrid recovery on ten-node nonlinear random DAGs. The calibrated per-field heads generate the geometric mask; local BIC then scores parent sets inside the mask. The best row per seed is reported from a small grid over mask density, latent penalty, local polynomial score, maximum parents, and BIC penalty scale.}
\label{tab:bridge_hybrid_random10}
\vspace{0.5em}
\resizebox{\textwidth}{!}{%
\begin{tabular}{rrrrrrrr}
\toprule
\textbf{Seed} & \textbf{True edges} & \textbf{Mask arrows} & \textbf{Mask recall} & \textbf{Pred. edges} & \textbf{Precision} & \textbf{Recall} & \textbf{$F_1$} \\
\midrule
$21$ & $14$ & $22$ & $0.929$ & $16$ & $0.812$ & $0.929$ & $0.867$ \\
$22$ & $12$ & $18$ & $0.917$ & $13$ & $0.846$ & $0.917$ & $0.880$ \\
$23$ & $10$ & $22$ & $1.000$ & $12$ & $0.833$ & $1.000$ & $0.909$ \\
$24$ & $11$ & $18$ & $1.000$ & $14$ & $0.786$ & $1.000$ & $0.880$ \\
$25$ & $14$ & $22$ & $0.929$ & $15$ & $0.867$ & $0.929$ & $0.897$ \\
$26$ & $9$  & $18$ & $0.667$ & $7$  & $0.857$ & $0.667$ & $0.750$ \\
\midrule
\textbf{Mean} & -- & $20.0$ & $0.907$ & $12.8$ & $0.834$ & $0.907$ & $0.864$ \\
\bottomrule
\end{tabular}
}
\end{table}

Table~\ref{tab:bridge_hybrid_random10} is the main scalability result for the present implementation. The same ten-node seeds defeated direct \textsc{SKFM} extraction: density-ratio selectors, integrated-flow Radon--Nikodym weights, sparse structure-constant selectors, and root/contamination hard masks all became topology-sensitive. In contrast, the \textsc{Bridge}-hybrid pipeline is stable across the sweep, with mean directed $F_1=0.864$ and mean precision $0.834$. The hardest case, seed~26, has four independent roots and lower mask recall; even there, the downstream BIC scorer keeps precision high while recall is limited by the screen. This validates the intended division of labor: Lie/Frobenius geometry should reduce and regularize the candidate family, while a score-based layer should handle global acyclicity, high indegree, shortcut edges, and multi-root ambiguity.

Appendix~\ref{app:random-dag-visuals} shows representative recovered graphs from this sweep. Those visual diagnostics are included to make the target explicit: left panels show the sampled visible direct-effect DAG, right panels show the directed graph selected inside the \textsc{Bridge} mask, with true positives, false positives, and missed arrows colored separately. The latent-confounded extension overlays hidden common causes as dashed bidirected arcs, matching the ADMG convention, while keeping the scored directed graph restricted to visible arrows.

\subsection{Experiment 6: Sachs \textsc{Bridge}-Pruned GES and Calibration Diagnostic}
We next evaluate the same geometric candidate-screening principle on the Sachs protein-signaling benchmark. The dataset contains $n=853$ measurements over $d=11$ signaling proteins, with a canonical directed reference graph containing $17$ arrows. Unlike the synthetic experiment above, Sachs is a noisy biological benchmark rather than a controlled data-generating process. Protein signaling is also closer to a rapid equilibrium system than to a clean acyclic transport model, so we treat Sachs as a real-data diagnostic: can the same geometry run on biological measurements, and where does calibration begin to fail?

The experiment compares four families of estimators:
\begin{enumerate}
    \item \textbf{Pooled GES baseline:} run unconstrained GES with a Gaussian BIC score on the full Sachs data.
    \item \textbf{\textsc{Bridge}-pruned GES/BIC:} first compute the Lie/Frobenius candidate mask, then run a GES-style forward/backward/reversal search only over mask-admissible arrows.
    \item \textbf{\textsc{Bridge}-pruned TCES/GES:} augment the pruned GES score with environment-specific likelihood and $j$-stability terms. Because the available Sachs panel in the repository contains generated environment labels, we regenerate a $k=5$ panel and treat this condition as a lightweight interventional-scoring ablation rather than a claim about the original experimental perturbation design.
    \item \textbf{Per-field \textsc{Bridge}-hybrid BIC diagnostic:} learn per-node intervention-field heads with the divergence-continuity loss, build an autodifferentiated geometric mask, and score local parent sets by BIC inside that mask. For this diagnostic we use a canonical topological order derived from the Sachs reference graph, so the reported number measures field calibration and parent-set scoring rather than order discovery.
\end{enumerate}

This use of GES also has a deeper categorical interpretation. In earlier work on the higher algebraic K-theory of causality, covered edge reversals in Chickering-style equivalence classes were lifted from graph moves to natural transformations between homotopically equivalent cPROP functors, with Grothendieck group completion organizing causal equivalence at the categorical level \citep{sridhar:higher-algebraic-k-theory}. The present information-geometric layer can be read as an analytic front end to that categorical picture: Lie/Frobenius screening proposes the local tangent directions and equivalence moves worth scoring, while GES-style reversal search navigates the reduced family of categorical causal models.

The original kernel \textsc{Bridge} screen retains $44$ directed candidate arrows out of the $110$ possible non-self arrows and preserves $12/17$ canonical Sachs arrows. More importantly for the tractability thesis, it contains $15/16$ arrows returned by pooled GES. Thus the Lie/Frobenius mask removes $60\%$ of the directed candidate space while preserving nearly the entire GES-selected structure. The per-field diagnostic is more conservative: its best high-precision mask and BIC scorer returns a $9$-edge graph with directed precision $0.889$ but recall $0.471$. The associated field calibration is substantially weaker than in the synthetic ten-node runs, with mean held-out continuity correlation $0.587$ and minimum correlation $0.499$. We therefore report Sachs as a calibration frontier rather than as a solved biological discovery benchmark.

\begin{table}[h]
\centering
\caption{Sachs \textsc{Bridge}-pruned protein-signaling benchmark and per-field calibration diagnostic. Directed metrics compare adjacency matrices to the canonical Sachs graph. Skeleton metrics ignore orientation. The per-field row uses a reference topological order and should be read as a calibration/parent-set diagnostic rather than a fully order-free real-data recovery claim.}
\label{tab:sachs_geometry_ges}
\vspace{0.5em}
\resizebox{\textwidth}{!}{%
\begin{tabular}{lrrrrr}
\toprule
\textbf{Method} & \textbf{Edges} & \textbf{Directed $F_1$} & \textbf{Directed Precision} & \textbf{Directed Recall} & \textbf{Skeleton $F_1$} \\
\midrule
Pooled GES & $16$ & $0.485$ & $0.500$ & $0.471$ & $0.640$ \\
Per-env GES intersection & $8$ & $0.320$ & $0.500$ & $0.235$ & $0.381$ \\
\textsc{Bridge}-pruned GES/BIC & $8$ & $0.320$ & $0.500$ & $0.235$ & $0.640$ \\
\textsc{Bridge}-pruned TCES/GES ($k=5$ env) & $9$ & $0.385$ & $0.556$ & $0.294$ & $0.615$ \\
Per-field \textsc{Bridge}-hybrid BIC (order diagnostic) & $9$ & $0.615$ & $0.889$ & $0.471$ & $0.615$ \\
\bottomrule
\end{tabular}
}
\end{table}

Table~\ref{tab:sachs_geometry_ges} shows that pooled GES obtains the strongest unconstrained directed baseline among the order-free runs. However, the CausalLearn pooled GES output includes several reciprocal arrows, reflecting partially directed or ambiguous CPDAG structure under the adjacency extraction used in this experiment. On skeleton recovery, \textsc{Bridge}-pruned GES/BIC matches pooled GES exactly ($F_1=0.640$) while operating inside the $44$-arrow \textsc{Bridge} mask. Adding the TCES-style environment and $j$-stability score improves directed recovery within the pruned family from $F_1=0.320$ to $F_1=0.385$, with precision increasing from $0.500$ to $0.556$. The per-field diagnostic obtains the highest directed precision and $F_1$ in the table, but it uses the reference order and has low recall because the learned real-data field calibration is not yet at the synthetic level.

This benchmark sharpens the practical message of the random-DAG experiment. The information-geometric layer need not replace GES; it can act as a geometric front end that filters the model class before score-based search. On synthetic graphs where continuity calibration succeeds, this hybrid produces high directed recovery. On Sachs, the method still runs and can produce a precise parent-set diagnostic, but the continuity score exposes weak overlap, unmodeled contexts, possible feedback, or other biological nonstationarity. Stronger Wasserstein regularization, domain adaptation, and CPDAG-aware orientation scoring are therefore future work rather than hidden assumptions in the present claim.

We also ran \textsc{SKFM} Algorithm~\ref{alg:skfm} on Sachs as a real-data pilot using an inferred net-flow order rather than the CSV column order. The learned amortized fields closely matched the kernel geometry, with influence correlation $0.947$ and relative influence MSE $0.103$, and the spectral factorization selected a rank-two residual footprint. However, direct graph extraction remained weak against the canonical Sachs DAG: the $13$-edge graph achieved directed $F_1=0.133$ with precision $0.154$ and recall $0.118$. This is a useful negative control for \textsc{SKFM}: the field-learning layer transfers to Sachs, but robust real-data graph extraction still benefits from the downstream score-based layer used in Table~\ref{tab:sachs_geometry_ges}.

\subsection{Experiment 7: S9 Gene-Expression Pilot}
The repository also contains a prepared S9 expression panel derived from the raw Science supplement table. Unlike Sachs, this panel is not accompanied by a checked-in biological gold DAG. We therefore use S9 as a qualitative real-data stress test for the computational promise of the method: does the Lie/Frobenius screen turn an otherwise enormous score-based model class into a small enough family for local GES-style search, without pretending that we can report SHD or $F_1$ against an unrelated reference graph?

For interpretability, we run the pilot on the nine gene-expression variables in the prepared panel and exclude sequencing-count technical covariates by default. The panel contains $n=33$ samples and two inferred environments with counts $30$ and $3$, so the present run uses pooled \textsc{Bridge}-pruned BIC/GES rather than making a strong interventional-scoring claim. With a stricter two-parent-per-target \textsc{Bridge} screen, the candidate directed arrows drop from $72$ to $18$.

\begin{table}[h]
\centering
\caption{\textsc{Bridge} S9 gene-expression pilot. Because no validated S9 gold DAG is checked into the repository, this table reports search-space compression and selected qualitative edges rather than accuracy metrics.}
\label{tab:s9_gene_pilot}
\vspace{0.5em}
\begin{tabular}{lr}
\toprule
\textbf{Quantity} & \textbf{Value} \\
\midrule
Samples / genes & $33 / 9$ \\
Possible directed non-self arrows & $72$ \\
\textsc{Bridge}-retained candidate arrows & $18$ \\
All labeled DAGs on nine genes & $1{,}213{,}442{,}454{,}842{,}881$ \\
Full-space MEC scale (DAGs$/3.7$ heuristic) & $\approx 3.28{\times}10^{14}$ \\
Acyclic DAGs inside geometry mask & $119{,}216$ \\
Exact acyclic-family compression & $10{,}178{,}532{,}704\times$ \\
Greedy pruned-GES moves evaluated & $73$ \\
Selected directed edges & $3$ \\
\bottomrule
\end{tabular}
\end{table}

The selected qualitative graph contains the arrows
\[
    \texttt{C21orf82}\to\texttt{B2M},\qquad
    \texttt{RPS9}\to\texttt{B2M},\qquad
    \texttt{S100A5}\to\texttt{RPS9}.
\]
The largest Lie-bracket anisotropy pairs are \texttt{C21orf82--PSAT1}, \texttt{C21orf82--B2M}, and \texttt{OLFM2--S100A5}, which should be read as latent-confounding or non-closure candidates rather than validated biological mechanisms. This experiment is deliberately modest, but it is valuable: on a real expression panel, the geometric layer converts a quadrillion-scale labeled DAG family, and still a roughly $10^{14}$-scale Markov-equivalence-class family under the $3.7$ quotient heuristic, into an exactly countable acyclic candidate family before score-based search.

The \textsc{SKFM} real-data pilot gives a complementary discovery-only view on the same nine-gene panel. With an inferred net-flow order and the top-$k$/quantile extractor, the learned fields again track the kernel geometry well, with influence correlation $0.920$ and relative influence MSE $0.089$. The spectral factorization selects a rank-two residual footprint, and the direct extractor returns a compact $10$-edge graph. Because S9 has no validated gold DAG in this repository, we report these as field-quality and discovery-compression diagnostics rather than accuracy metrics.

\subsection{Experiment 8: \textsc{Bridge}-Constrained DCDI}
Finally, we ask whether the same Lie/Frobenius candidate family can regularize a differentiable graph-learning objective. DCDI is a natural comparison point because it already represents causal discovery as continuous optimization over graph parameters. The question here is therefore not whether DCDI should replace the geometric screen, but whether DCDI's learned edge weights become more informative when evaluated inside the admissible family selected by Lie/Frobenius closure.

The resulting \textsc{Bridge}-DCDI variant combines two complementary signals. DCDI supplies differentiable edge scores learned from the observed variables, while the Lie/Frobenius module supplies a directed admissibility mask together with local influence and bracket-anisotropy diagnostics. We then select high-scoring edges only inside this \textsc{Bridge} candidate family, optionally enforcing acyclicity. This turns differentiable graph learning into another scoring rule over the same \textsc{Bridge}-pruned family used in the earlier GES/TCES experiments.

Table~\ref{tab:lie_dcdi_results} reports three representative runs of this integrated pipeline. The PISA 2022 run uses the real OECD trend variables \texttt{escs\_trend}, \texttt{hisei\_trend}, \texttt{homepos\_trend}, and \texttt{paredint\_trend}; because no ground-truth DAG is available for these variables, it is reported as a discovery-only comparison against the Kan-Do baseline graph. The S9/Sachs and sheaf-fast runs use reference adjacency matrices and therefore report directed and skeleton metrics.

\begin{table}[h]
\centering
\caption{\textsc{Bridge}-constrained DCDI results. PISA is discovery-only; the two ground-truth rows report directed and skeleton metrics. ``Base $F_1$'' is the Kan-Do baseline for the same benchmark.}
\label{tab:lie_dcdi_results}
\vspace{0.5em}
\resizebox{\textwidth}{!}{%
\begin{tabular}{lrrrrrrr}
\toprule
\textbf{Dataset} & \textbf{Vars} & \textbf{\textsc{Bridge} Cand.} & \textbf{Selected Edges} & \textbf{Directed $F_1$} & \textbf{SHD} & \textbf{Skeleton $F_1$} & \textbf{Base $F_1$} \\
\midrule
PISA 2022 OECD trends & $4$ & $7$ & $4$ & -- & -- & -- & -- \\
S9/Sachs signaling & $11$ & $22$ & $15$ & $0.296$ & $19$ & $0.400$ & $0.118$ \\
Sheaf-fast chain & $8$ & $16$ & $9$ & $\mathbf{0.500}$ & $8$ & $\mathbf{0.875}$ & $0.333$ \\
\bottomrule
\end{tabular}
}
\end{table}

On the real PISA 2022 OECD trend table, Lie-DCDI selects four arrows:
\[
\begin{aligned}
\texttt{escs\_trend}&\to\texttt{hisei\_trend},&
\texttt{escs\_trend}&\to\texttt{homepos\_trend},\\
\texttt{escs\_trend}&\to\texttt{paredint\_trend},&
\texttt{paredint\_trend}&\to\texttt{hisei\_trend}.
\end{aligned}
\]
This graph shares both Kan-Do baseline PISA arrows, giving directed Jaccard overlap $0.5$ against that comparison graph. We do not report causal accuracy on PISA because no gold DAG is available for these OECD trend variables.

The \textsc{SKFM} Algorithm~\ref{alg:skfm} pilot on the same PISA trend table gives an additional discovery-only check. With an inferred net-flow order and the top-$k$/quantile extractor, the baseline geometry screen contains $9$ candidate arrows and the direct \textsc{SKFM} extractor returns a compact $4$-edge graph. The learned amortized fields match the kernel geometry very closely, with influence correlation $0.993$ and relative influence MSE $0.003$, and the spectral factorization selects a rank-two residual footprint. As with the Lie-DCDI PISA row, these numbers should be read as field-quality and structural-compression diagnostics rather than as causal accuracy claims.

On the S9/Sachs signaling benchmark, Lie-DCDI recovers four directed ground-truth edges:
\[
\begin{aligned}
\texttt{Raf}&\to\texttt{Mek},&
\texttt{Plcg}&\to\texttt{PIP3},\\
\texttt{PIP2}&\to\texttt{Plcg},&
\texttt{PIP3}&\to\texttt{Akt}.
\end{aligned}
\]
Against the S9 reference graph, this gives precision $0.267$, recall $0.333$, directed $F_1=0.296$, and $\mathrm{SHD}=19$. This improves the Kan-Do S9 baseline, whose directed $F_1$ is $0.118$ and $\mathrm{SHD}=30$. A caveat is that the S9 reference graph contains a reciprocal pair, whereas this Lie-DCDI run enforces a DAG; hence some reference edges are structurally unrecoverable under this setting.

On the sheaf-fast chain, Lie-DCDI recovers four of seven directed chain arrows and all seven true skeleton adjacencies, with two extra skeleton edges. Thus directed $F_1$ rises to $0.500$ compared with the sheaf-fast Kan-Do baseline of $0.333$, while skeleton $F_1$ reaches $0.875$. This run uses the same eight-node chain model as the earlier sheaf-fast experiment.

For context, we also include an earlier synthetic DCDI comparison from the same experimental suite. On linear ``perfect intervention'' synthetic DAGs over three settings, a $j$-stable DCDI aggregation path improves vanilla DCDI from directed $F_1=0.289$ to $0.429$, while reducing directed SHD from $76.23$ to $32.37$ and mean selected edges from $186.67$ to $55.40$. The geometry-constrained DCDI results sharpen the same lesson in a more direct way: differentiable edge weights are most useful when passed through a structural screen that encodes regime stability, local influence, and Lie/Frobenius closure.

\section{Identification Limits and Falsification Protocol}
\label{sec:limitations}

The experiments estimate selected DAGs for comparison with known synthetic generators, but latent-confounded discovery is generally an equivalence-class problem. In observational settings the identifiable object is normally a PAG representing MAG invariants. With known interventions, additional endpoint marks can be oriented only under the relevant intervention design and assumptions. With unknown soft targets, the appropriate target is a $\Psi$-PAG over graph--target pairs \citep{jaber2020soft}. None of the local \textsc{Bridge} scores changes these facts.

The current empirical evidence is strongest for \textsc{Bridge}-KT screening. The direct observational--interventional gap is an essential matched baseline because it can detect many of the same regime differences without derivatives or brackets. Conditional-invariance tests, density-ratio effective sample size, held-out continuity error, random tangent fields, permuted regime labels, and sensitivity to adjustment coordinates are likewise required to determine whether bracket geometry contributes incremental information. A null direct gap does not prove absence of an effect because exact or near cancellation is possible; conversely, a large gap does not identify a direct edge.

For unknown-target data, future evaluations should compare with $\Psi$-FCI on $\Psi$-PAG endpoint recovery rather than DAG $F_1$. For context-indexed domains, JCI-style baselines and explicit assumptions about context variables are required. For known targets with latent variables, FCI/RFCI or interventional PAG methods provide matched equivalence-class baselines. These comparisons are part of the due-diligence agenda and are not silently replaced by the current GES/BIC experiments.

The \textsc{SKFM} ablations isolate a second limitation. The learned fields can correlate strongly with kernel teachers while the final graph is poor. On non-chain motifs, supplying the true or coordinate order substantially improves extraction; on harder random DAGs, spectral, stochastic, and structure-constant extractors do not reliably overcome a bad learned order. Thus field calibration, candidate retention, ordering, and final graph selection must be reported separately. Exact motif recovery under a supplied order is a useful mechanism check, not evidence for general order-free identification.

\section{Scaling the Geometric Screen}
\label{sec:scaling}
\subsection{Amortized Flow Architecture for High-Dimensional Scalability}
A fundamental limitation when scaling the geometric screen to hundreds or thousands of nodes is the computational burden of training $d$ independent Continuous Normalizing Flows (CNFs), one for every observed variable. To bypass this $\mathcal{O}(d)$ architectural scaling bottleneck, we use an \textbf{Amortized Flow Architecture}. Instead of optimizing separate models, a single neural vector field parameterizes all regime responses by embedding variable indices into a shared continuous context space.

\subsubsection{Amortized Vector Field Parameterization}
Let $X = [X_1, X_2, \dots, X_d]^T \in \R^d$ be the vector of all visible variables. To evaluate interventions or conditional shifts on any arbitrary variable index $i \in \{1, \dots, d\}$, we define an index embedding function $\phi: \{1, \dots, d\} \to \R^k$ that maps discrete nodes to a compact, continuous vector space. 

Instead of independent neural velocity fields, we instantiate a single amortized neural network $f_\Theta$ parameterized by global weights $\Theta$. The continuous dynamics of the system are governed by a conditional Ordinary Differential Equation (ODE):
\begin{equation}
    \frac{dz(t)}{dt} = f_\Theta\big(z(t), t, \phi(i), \mathbf{m}_i\big)
\end{equation}
where $z(0) \sim p_0$ is a shared isotropic Gaussian base distribution, $\phi(i)$ is the target variable context embedding, and $\mathbf{m}_i \in \{0, 1\}^d$ is a regime mask indicating whether node $i$ is undergoing an active manipulation ($\mathbf{m}_i = 1$) or passive observation ($\mathbf{m}_i = 0$).

\subsubsection{Pointwise Radon--Nikodym Estimation via Shared Jacobians}
By integrating the single amortized vector field forward from $t=0$ to $t=1$, the change-of-variables formula tracks both the observational manifold ($p_{\text{obs}}$) and any target interventional manifold ($p_{\text{do}(X_i)}$) via a single shared forward pass. The log-likelihood profiles are computed natively via the instantaneous change-of-variables equation:
\begin{align}
    \log p_{\text{obs}}(x) &= \log p_0(z(0)) - \int_0^1 \text{Tr}\left( \nabla_{z(t)} f_\Theta\big(z(t), t, \phi(i), \mathbf{m}_i = \mathbf{0}\big) \right) dt \\
    \log p_{\text{do}(X_i)}(x) &= \log p_0(z(0)) - \int_0^1 \text{Tr}\left( \nabla_{z(t)} f_\Theta\big(z(t), t, \phi(i), \mathbf{m}_i = \mathbf{1}\big) \right) dt
\end{align}
Parameter sharing does not by itself cancel the base-density terms: the observational and interventional inverse trajectories can map the same evaluation point $x$ to different base coordinates. When both fitted laws have normalized likelihoods and $p_{\mathrm{do}(X_i)}\ll p_{\mathrm{obs}}$, the ratio is computed from the two complete log likelihoods:
\begin{equation}
    \rho_i(x)
    =
    \exp\!\left(
      \log p_{\mathrm{do}(X_i)}(x)-\log p_{\mathrm{obs}}(x)
    \right).
\end{equation}
Thus amortization shares parameters across regimes, while density-ratio validity still requires likelihood calibration, absolute continuity, and overlap.

\subsubsection{Amortized Lie Bracket Computation}
The amortization of the vector fields drastically simplifies the evaluation of the Lie bracket grid matrix for high-dimensional discovery. Let $v_i(x) = f_\Theta(x, 1, \phi(i), \mathbf{1})$ and $v_j(x) = f_\Theta(x, 1, \phi(j), \mathbf{1})$ be the velocity vectors at the boundary $t=1$. The Lie bracket $[v_i, v_j]$ is calculated across all pairs using the automated gradients of the shared network:
\begin{equation}
    [v_i, v_j] = \left( \frac{\partial f_\Theta}{\partial x} \Big|_{\phi(j)} \right) \cdot f_\Theta\big(x, 1, \phi(i), \mathbf{1}\big) - \left( \frac{\partial f_\Theta}{\partial x} \Big|_{\phi(i)} \right) \cdot f_\Theta\big(x, 1, \phi(j), \mathbf{1}\big)
\end{equation}
By computing these spatial jacobians sequentially using modern parallel automatic differentiation engines (such as vmap in PyTorch), the computation of the full $d \times d$ Lie bracket residual matrix scales at $\mathcal{O}(d^2)$ vector-matrix products over a single neural model, completely avoiding the prohibitive cost of training and coordinating hundreds of individual networks.

\subsection{Weak Overlap and Transport Regularization}
When observational and interventional regimes have weak overlap, pointwise density ratios can have high variance or fail to exist on parts of the interventional support. A Wasserstein or flow-matching penalty can regularize the learned transport numerically, but it does not restore absolute continuity, identify an intervention target, or justify a causal interpretation by itself. We therefore report overlap diagnostics and effective sample sizes, calibrate transport fields on held-out data, and treat optimal-transport losses as stabilization devices rather than identification theorems.

\subsection{Spectral Summaries of Residual Nonclosure}
For scalable screening, let $b_i(x)$ denote a vectorized collection of projected closure-residual features associated with intervention index $i$. The positive-semidefinite matrix
\[
  G_{ij}=\mathbb E\langle b_i(X),b_j(X)\rangle
\]
can be approximated by randomized eigensolvers or pair subsampling. Its leading eigenspace is a compact summary of correlated visible nonclosure. Proposition~\ref{latent_prop} gives the additional assumptions under which its rank can be interpreted as the dimension of a latent loading footprint. Without those assumptions, the spectrum is only a residual diagnostic: it need not count hidden variables, identify their coordinates, or provide a valid adjustment set.

The resulting projector may be used to rescue or downweight candidate arrows, but not to ``peel away'' confounding exactly. Any improvement must therefore be established empirically against direct regime-gap controls and latent-aware downstream baselines.

\subsection{Scalability Ablation: Spectral Updates and Frobenius Pair Budgets}
\label{sec:scalability-ablation}

The implementation also exposes where the Lie-algebraic discovery layer spends its time. A reproducible runtime ablation decomposes the \textsc{SKFM}/\textsc{Bridge} path into local field estimation, pairwise Lie/Frobenius geometry, optional \textsc{SKFM} field training and Lie penalty, spectral curvature factorization, and final extraction. With exact all-pairs geometry, the spectral stage dominated the small scaling sweep; replacing the dense eigensolve and full curvature pass by a randomized rank-$3$ update with an eight-pair curvature budget reduced the spectral stage at $d=8$ from about $0.518$ seconds to $0.149$ seconds, a $3.48\times$ stage speedup. After this change, pairwise Frobenius geometry became the next bottleneck. Restricting Frobenius residuals to candidate pairs touched by the directed influence screen reduced the $d=8$ Frobenius stage from about $0.267$ seconds to $0.120$ seconds, a further $2.23\times$ stage speedup. With both approximations enabled, the measured $d=8$ path drops to about $0.360$ seconds in the maintained benchmark.

\begin{figure}[t]
\centering
\includegraphics[width=0.72\linewidth]{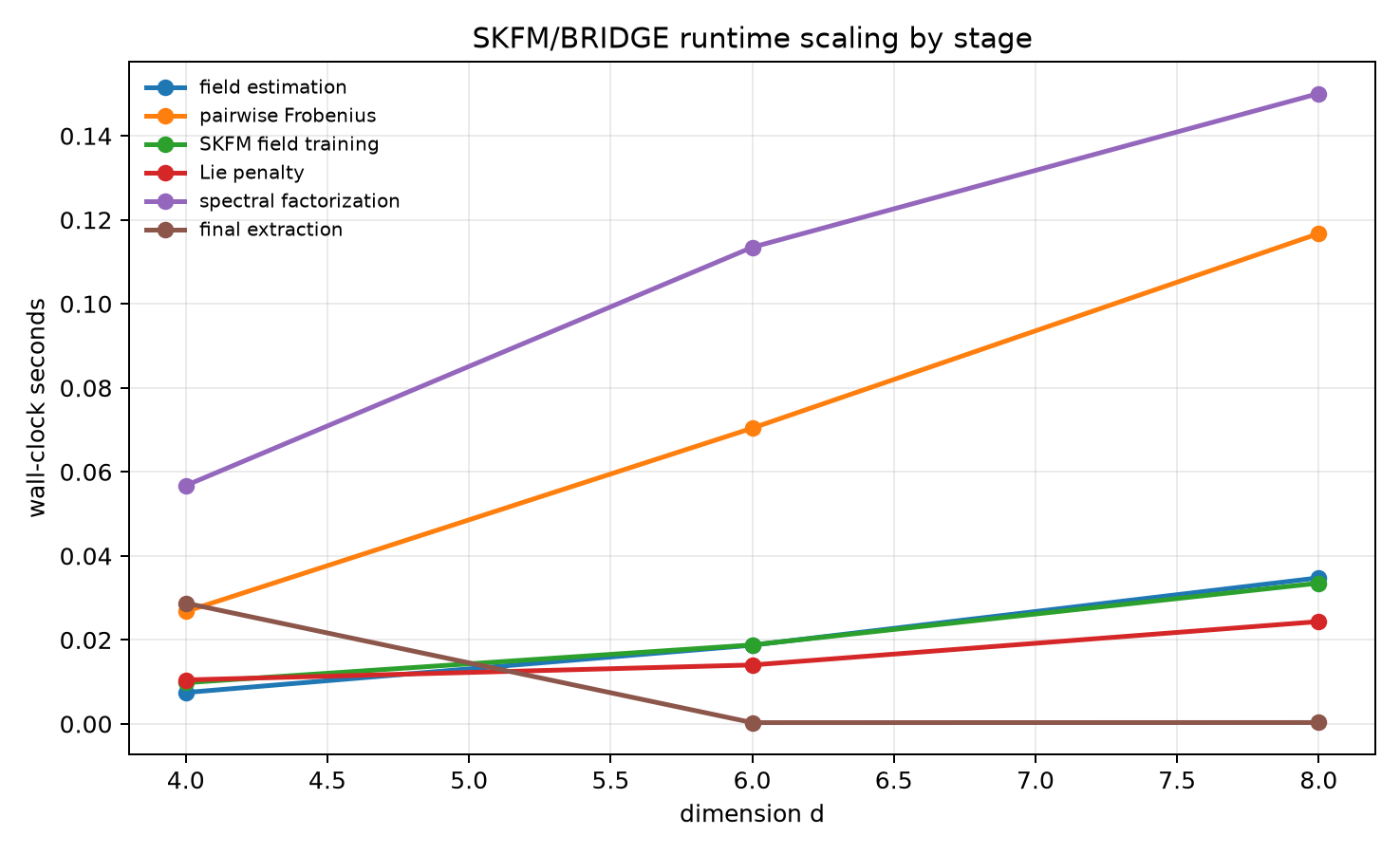}
\caption{Stage-wise runtime of the optimized \textsc{SKFM}/\textsc{Bridge} path at fixed sample size $n=64$, using randomized spectral updates and candidate-pair Frobenius geometry. After the randomized spectral update, the pairwise geometry stage becomes visible as the next bottleneck; after candidate-pair Frobenius, no single Lie-algebraic subroutine dominates at these dimensions.}
\label{fig:skfm_bridge_runtime_scaling}
\end{figure}

The same ablation makes the tractability--expressivity tradeoff measurable. On the maintained six-seed ten-node random-DAG demo, exact all-pairs Frobenius geometry remains the quality default: a fresh rerun gives mean directed $F_1=0.904$, precision $0.929$, and recall $0.885$. Candidate-pair Frobenius preserves mean recall and screen recall on this demo, but the partial anisotropy matrix changes spectral rescue enough to introduce additional false positives, reducing mean $F_1$ to $0.887$. A sampled 30-pair middle ground reaches mean $F_1=0.890$. Thus the adaptive modes are useful scaling controls, but they should be treated as calibrated geometry-budget approximations rather than replacements for the exact screen when maximum precision is required.

\begin{table}[t]
\centering
\small
\begin{tabular}{lccccc}
\toprule
\textbf{Frobenius mode} & \textbf{Precision} & \textbf{Recall} & $\boldsymbol{F_1}$ & \textbf{Screen recall} & \textbf{SHD} \\
\midrule
Exact all pairs & 0.929 & 0.885 & 0.904 & 0.885 & 2.33 \\
Candidate pairs & 0.890 & 0.885 & 0.887 & 0.885 & 2.83 \\
Candidate + sampled pairs & 0.901 & 0.881 & 0.890 & 0.881 & 2.67 \\
\bottomrule
\end{tabular}
\caption{Quality impact of reducing the Frobenius pair budget on the standard six-seed ten-node \textsc{Bridge}-hybrid random-DAG demo. Candidate-pair geometry preserves recall but loses precision through additional spectral-rescue false positives. This illustrates the computational tractability--expressivity tradeoff induced by approximating the full Lie/Frobenius geometry.}
\label{tab:frobenius_budget_quality}
\end{table}

\section{Summary  and Future Research Directions}
\label{sec:conclusion}

This paper develops \textsc{Bridge}, an information-geometric front end for causal discovery from observational and heterogeneous regime data, together with \textsc{SKFM}, an amortized field-learning and residual-summarization extension. The principal claims concern calibrated response fields, high-recall candidate screening, residual closure diagnostics, and order-dependent direct extraction.

\begin{itemize}
    \item \textbf{Visible nonclosure} is summarized by projected Lie-bracket residuals. Latent confounding is one possible cause, but the diagnostic is deliberately nonspecific.
    \item \textbf{Discovery} proceeds by retaining a candidate family and delegating final identification to a regime-appropriate learner; direct \textsc{SKFM} extraction is conditional on a reliable variable order.
\end{itemize}

The response-field representation provides a smooth interface for comparing observational and interventional regimes. Fisher geometry supplies a useful invariant metric under its standard statistical assumptions, while the causal interpretation continues to come from the data regime, intervention design, and downstream identification assumptions.

Causal discovery in \textsc{Bridge} uses localized, parallelizable checks of vector-field influence and closure followed by a conventional scorer on a smaller family. In the first practical benchmark, this geometric screening reduced a seven-node acyclic search family by more than four orders of magnitude while retaining all generator edges; under the harder endpoint-confounded version, the downstream TCES/BIC scorer recovered most but not all directed chain structure. A latent-aware FCI baseline on the same chain sweep recovered the true skeleton with one extra adjacency at each size. A non-chain motif stress test showed both the limitation of a pure influence mask and the potential value of spectral residual footprints. The companion \textsc{SKFM} ablation further showed that strong field calibration does not by itself guarantee reliable graph extraction.

The decisive scalability check is the ten-node nonlinear random-DAG experiment. Direct \textsc{SKFM} graph extraction became unstable across these seeds even when the learned fields were well calibrated, confirming that global DAG recovery from continuous Lie-space scores is ill-conditioned without a downstream search layer. The \textsc{Bridge}-hybrid pipeline resolved this: per-field calibrated intervention heads produced a high-quality geometric candidate family, and local BIC scoring inside that family achieved mean directed $F_1=0.864$ with mean precision $0.834$. This is the practical contribution of the present implementation. In Sachs protein signaling, by contrast, the same per-field calibration reached only mean continuity correlation $0.587$, and the best ordered parent-set diagnostic achieved high precision but low recall. We therefore treat Sachs as a real-data calibration frontier rather than as a solved benchmark. The S9 expression pilot shows the same compression phenomenon on real data without a gold DAG, reducing a nine-gene quadrillion-scale labeled DAG family to $119{,}216$ acyclic candidates. The geometry-constrained DCDI experiment shows the same structural lesson in a differentiable-discovery setting: DCDI weights inside a Lie/Frobenius candidate family improve Kan-Do baselines on S9/Sachs and sheaf-fast ground-truth benchmarks.

Taken together, Proposition~\ref{geom-screen}, Corollary~\ref{downstream-selection}, and Propositions~\ref{latent_prop} and~\ref{gradient_flow} delimit the mathematical spine of this empirical story. \textsc{Bridge} has a conditional screen-retention guarantee; any downstream identification guarantee is inherited from the chosen learner. \textsc{SKFM} estimates a residual-footprint rank only under an explicit factorization and supplies acyclicity only relative to an order. Finite-sample rates, specificity for latent confounding, robust biological transport calibration, and order-free graph extraction remain open.

\subsection{Statistical Efficiency and Finite-Sample Analysis}

While Propositions~\ref{geom-screen} and~\ref{latent_prop} give asymptotic screening and rank-recovery guarantees under margin, smoothness, and eigen-gap assumptions, finite-sample performance remains an open area of investigation. The following points outline key challenges for future \textsc{SKFM} implementations:

\begin{itemize} 

\item \textbf{Convergence Rates}: Establishing rates for residual-Gram estimation and screen retention is needed to complement the fixed-dimensional consistency statements.

\item \textbf{Generalization and Noise Robustness}: The paper assumes the statistical manifold is smooth, but finite samples may introduce noise that obscures curvature. Future work should explore how the Wasserstein regularizer stabilizes spectral factorization in small samples where manifold curvature is not well-separated.

\item \textbf{Downstream Performance}: The ten-node random-DAG experiments show that, when per-field continuity calibration succeeds, geometric compression can directly improve downstream recovery rather than merely reducing enumeration cost. The remaining question is how broadly this transfers across graph families, sample sizes, and real-data regimes where overlap and stationarity are weaker.

\item \textbf{Comparison with LiNGAM}: Future work should compare against LiNGAM-family methods only on benchmarks satisfying their functional and distributional assumptions, while separately reporting mask retention and final graph recovery.
These questions define the path from theoretical guarantees to scalable, general-purpose causal discovery tools.

\end{itemize} 

\subsection{Natural-Gradient Causal Discovery}

Once response-field learning is parameterized on a statistical model, the Euclidean gradient need not be the best search direction. The Fisher metric defines a natural gradient flow over model parameters under the usual information-geometric assumptions \citep{amari1998natural}. Natural-gradient methods can be viewed as mirror-descent updates in dual information-geometric coordinates \citep{beck2003mirror,DBLP:conf/nips/ThomasDGM13}; in the present setting, closure and calibration losses can constrain admissible updates. This is an optimization proposal, not an additional causal-identification result.

\subsection{Learning Causality from Language} 

A second application frontier is causal discovery from text, where the ``interventions'' are often semantic or institutional perturbations rather than laboratory do-operations.  In settings such as 10-K filings, scientific papers, policy documents, or product reviews, large language models can extract local causal claims, but those claims must then be organized into persistent world models rather than a flat list of triples. This connects the present framework to DEMOCRITUS-style large causal models from language \citep{mahadevan2025largecausalmodelslarge} and PROMETHEUS-style causal atlases over text, data, and models \citep{mahadevan2026prometheus}. From the present perspective, each textual neighborhood defines a local predictive-state or sheaf patch \citep{singh2004predictive,maclane_sheaves}; semantic interventions induce vector fields on the distribution of claims, entities, risks, and outcomes; and non-closing Lie brackets reveal hidden context, contradictory evidence, or latent sector/customer/product structure. This suggests a path from information-geometric causal discovery on tabular interventions to topos world models in which causal knowledge extracted from language is scored, glued, and stress-tested geometrically.

Initial integrations with a system for building trustworthy foundries suggest this direction is empirically testable \citep{mahadevan2026odyssey}. Preliminary proxy-field studies on MyFixIt laptop repair, Adobe 10-K filing retrieval, Democritus GLP-1 causal extraction, Lovesac product-feedback review, and MCT argumentative microtexts show how BRIDGE/SKFM residuals can act as vocabulary critics for PSR failures, missing qualifiers, overclaims, hidden mechanisms, and rebuttal/undercut structure. These examples are not yet full observational \textsc{SKFM} training results, but they indicate a practical route for converting language-derived world-model failures into the same residual-analysis interface used by the geometric method.

\subsection{Manifold Structure} 
  
  When an interventional regime pushes a system into state-space sectors completely unvisited under pure observation, the Radon--Nikodym density ratio $\rho(z)$ encounters severe support divergence. Future work must investigate regularized flow boundaries or incorporate optimal transport (Wasserstein) metrics \citep{villani2009optimal} to gracefully bound the metric tensor as overlap approaches zero.

  \begin{itemize} 
    \item \textbf{Extension to Non-Smooth, Discrete Spaces:} The present method relies on smooth vector fields. Discrete analogues require finite-difference or other nonsmooth constructions and new calibration theory.
    \item \textbf{Transport Regularization:} Benamou--Brenier or related optimal-transport objectives may stabilize field estimation \citep{benamou2000computational}, but their effect on identification and overlap diagnostics must be evaluated separately.
\end{itemize}

The resulting framework is best viewed as a geometric screening and diagnostic layer: it organizes regime responses continuously, measures failures of visible closure, and hands a reduced candidate family to causal procedures whose assumptions match the available data.

\section*{Changes in This Version}
This revision narrows and corrects the paper's theoretical claims while preserving the reported experiments. It removes the proposed Kan-extension interpretation of conditioning, intervention, front-door adjustment, and counterfactual transport; distinguishes fixed structural interventions from learned distribution-transport fields; and treats projected Lie brackets as nonspecific closure diagnostics rather than certificates of latent confounding. The geometry theorem is restated as a candidate-retention result, with downstream identification isolated in a separate conditional corollary. The spectral result now concerns the rank of a residual footprint under an explicit low-rank factorization, and the acyclicity result is explicitly conditional on a supplied or correctly learned order. The revision also defines known-target, unknown-target, and domain-indexed regimes, adds equivalence-class and falsification requirements, renames SKFM as Spectral Kernel Flow Matching, and recalibrates the experimental discussion accordingly.

\clearpage
\appendix
\section{Random-DAG Visual Diagnostics}
\label{app:random-dag-visuals}

Figure~\ref{fig:bridge_hybrid_random10_examples} shows three representative ten-node random-DAG diagnostics from the \textsc{Bridge}-hybrid sweep in Table~\ref{tab:bridge_hybrid_random10}. Each row uses the same topological layout for the true DAG and the discovered diagnostic graph. Green arrows are true positives, red arrows are false positives, and dashed orange arrows are true arrows missed by the ordered local BIC step inside the \textsc{Bridge} mask. The examples include a high-performing seed, a typical high-recall seed, and the hardest retained seed in the reported sweep.

\begin{figure}[p]
    \centering
    \includegraphics[width=\textwidth]{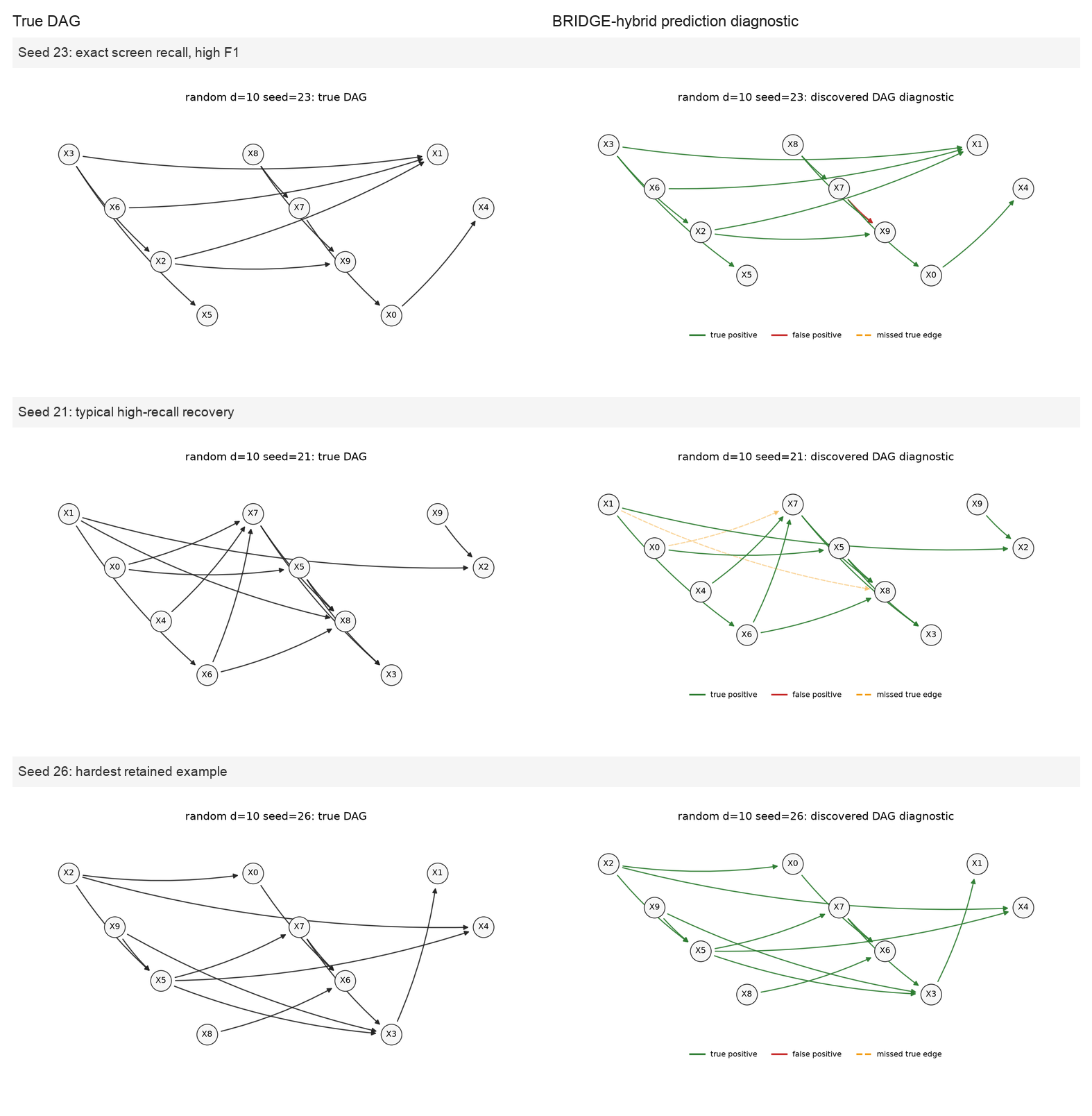}
    \caption{Representative \textsc{Bridge}-hybrid random-DAG visual diagnostics. Left panels show the sampled true ten-node DAGs; right panels show the corresponding discovered graphs, with true positives in green, false positives in red, and missed true edges as dashed orange arrows. Seed~23 illustrates a high-recall/high-$F_1$ case, seed~21 a typical recovery, and seed~26 the hardest retained example from the sweep.}
    \label{fig:bridge_hybrid_random10_examples}
\end{figure}

Figure~\ref{fig:bridge_hybrid_latent_random10_examples} gives the analogous diagnostic for the latent-confounded random-DAG extension. The directed accuracy metrics are still computed only against the visible direct-effect DAG; injected hidden common causes are displayed separately as dashed teal bidirected arcs between their observed children. This keeps the fully observed scalability claim distinct from the latent robustness/diagnostics question while making the data-generating confounding structure visible in the appendix.

\begin{figure}[p]
    \centering
    \includegraphics[width=\textwidth]{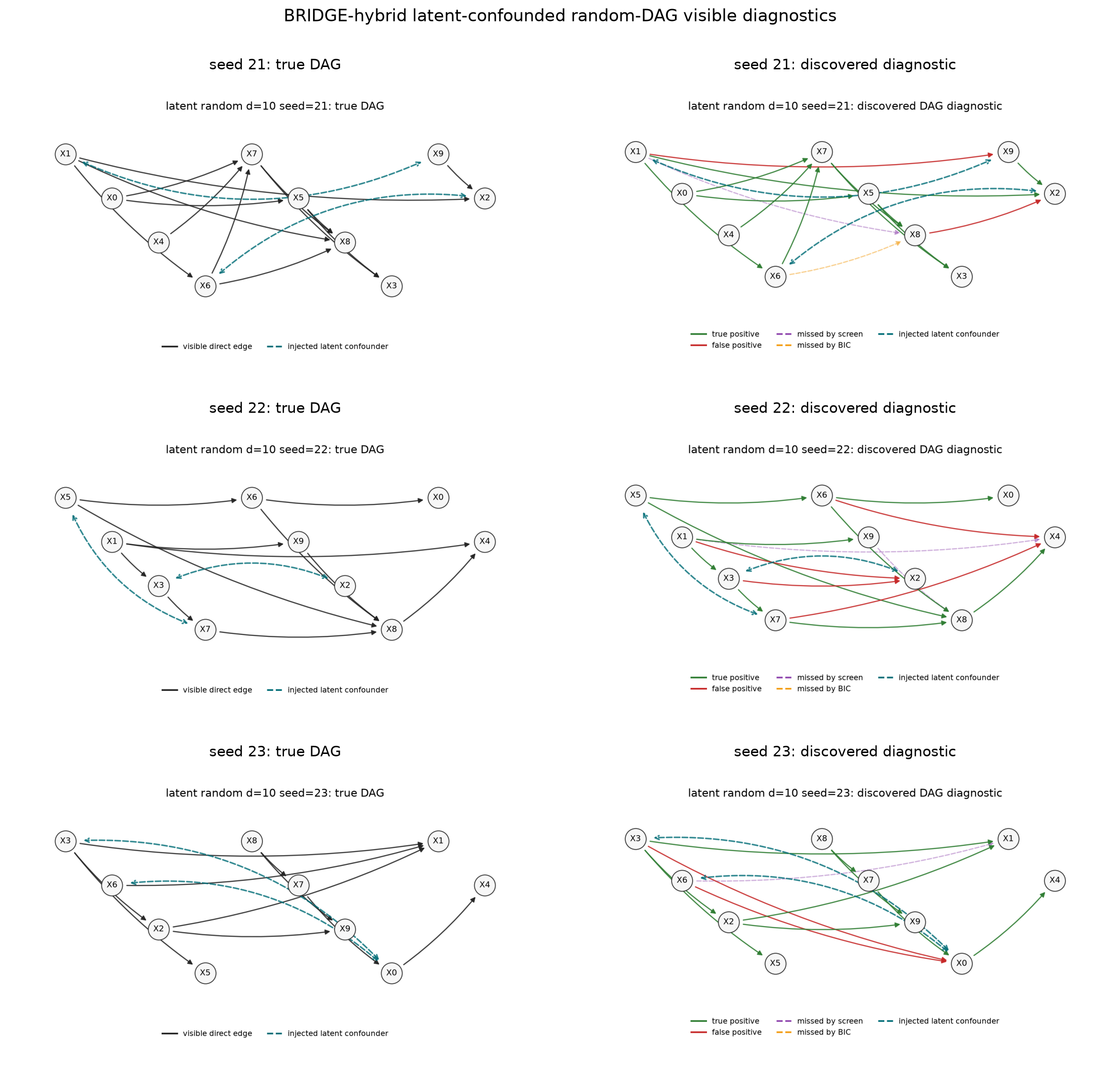}
    \caption{Latent-confounded \textsc{Bridge}-hybrid random-DAG visual diagnostics. Left panels show the visible direct-effect DAGs sampled by the generator, augmented with dashed teal bidirected arcs for injected hidden common causes. Right panels show the corresponding discovered visible graphs with the same true-positive, false-positive, and missed-edge coloring as Figure~\ref{fig:bridge_hybrid_random10_examples}. The latent arcs are diagnostic overlays rather than directed edges scored in the visible-DAG metrics.}
    \label{fig:bridge_hybrid_latent_random10_examples}
\end{figure}

\bibliography{allcitations,references}
\bibliographystyle{abbrvnat}

\end{document}